\newcommand{\eqdef}{:=}
\newcommand{\R}{\mathbb{R}}
\newcommand{\Prob}{\mathbf{Prob}}
\newcommand{\E}{\mathbf{E}}
\newcommand{\bB}{\mathbf B}
\newcommand{\bM}{\mathbf M}
\newcommand{\bP}{\mathbf{P}}
\newcommand{\bX}{\mathbf{X}}
\DeclareMathOperator{\Diag}{Diag}       
\theoremstyle{plain}
\newtheorem{theorem}{Theorem}
\newtheorem{lemma}[theorem]{Lemma}
\theoremstyle{definition}
\newtheorem{definition}[theorem]{Definition}
\newcommand*{\rom}[1]{\expandafter\@slowromancap\romannumeral #1@}
\title{Importance Sampling for Minibatches}
\author{Dominik Csiba and Peter Richt\'arik\thanks{This author would like to acknowledge support from the EPSRC Grant EP/K02325X/1, \textit{Accelerated Coordinate
Descent Methods for Big Data Optimization} and the EPSRC Fellowship EP/N005538/1, \textit{Randomized Algorithms for
Extreme Convex Optimization}.} \\ \\ \textit{School of Mathematics} \\ \textit{University of Edinburgh} \\ \textit{United Kingdom}}
\begin{document}

\maketitle

\begin{abstract} 
Minibatching is a very well studied and  highly popular technique  in supervised learning, used by practitioners due to its ability to accelerate training through better utilization of parallel processing power and reduction of stochastic variance. Another popular technique is importance sampling -- a strategy for preferential sampling of more important examples also capable of accelerating the training process. However, despite considerable effort by the community in these areas, and due to the inherent technical difficulty of the problem, there is no existing work combining the power of importance sampling with the strength of minibatching. In this paper we propose the first {\em importance sampling for minibatches} and give simple and rigorous complexity analysis of its performance. We illustrate on synthetic problems that for training data of certain properties, our sampling can lead to several orders of magnitude improvement in training time. We then test the new sampling on several popular datasets, and show that the improvement can reach an order of magnitude. 
\end{abstract} 

\section{Introduction}

Supervised learning is a widely adopted learning paradigm   with important applications such as regression, classification and prediction. The most popular approach to training supervised learning models is via empirical risk minimization (ERM). In ERM, the practitioner collects data composed of example-label pairs, and seeks to identify the best predictor by minimizing the empirical risk, i.e., the average risk associated with the predictor over the training data. 

With ever increasing demand for accuracy of the predictors, largely due to successful industrial applications, and with ever more sophisticated models that need to trained, such as deep neural networks \cite{DeepLearning,ConvNet}, or multiclass classification \cite{Multiclass}, increasing volumes of data are used in the training phase. This leads to huge  and hence extremely computationally intensive ERM problems. 

Batch algorithms---methods that need to look at all the data before taking  a single step to update the predictor---have long been known to be prohibitively impractical to use. Typical examples of batch methods are gradient descent and classical quasi-Newton methods. One of the most popular algorithms for overcoming the deluge-of-data issue is stochastic gradient descent (SGD), which can be traced back to a seminal work of  \citet{robbins1951}. In SGD, a single random example is selected in each iteration, and the predictor is updated using the information obtained by computing the gradient of the loss function associated with this example. This leads to a much more fine-grained iterative process, but at the same time introduces considerable stochastic noise, which eventually---typically after one or a few passes over the data----effectively halts the progress of the method, rendering it unable to push the training error (empirical risk) to the realm of small values.

\subsection{Strategies for dealing with stochastic noise}

Several approaches have been proposed to deal with the issue of  stochastic noise. The most important of these are  i) decreasing stepsizes, ii) minibatching, iii) importance sampling and  iv) variance reduction via ``shift'', listed here from historically first to the most modern. 

The first strategy, {\em decreasing stepsizes}, takes care of the noise issue by a gradual and direct scale-down process, which ensures that SGD converges to the ERM optimum \cite{tongSGD}. However, an unwelcome side effect of this is a  considerable slowdown of the iterative process \cite{BottouSGD}. For instance, the convergence rate is sublinear even if the function to be minimized is strongly convex.

The second strategy, {\em minibatching}, deals with the noise by utilizing a random set of examples in the estimate of the gradient, which effectively decreases the variance of the estimate \cite{shalev2011pegasos}. However, this has the unwelcome side-effect  of requiring more computation. On the other hand, if a parallel processing machine is available, the computation can be done concurrently, which ultimately leads to speedup. This strategy does not result in an improvement of the convergence rate (unless progressively larger minibatch sizes are used, at the cost of further computational burden \cite{friedlander2012hybrid}), but can lead to massive improvement of the leading constant, which ultimately means acceleration (almost linear speedup for sparse data) \cite{pegasos2}.

The third strategy, {\em importance sampling}, operates by a careful data-driven design of the probabilities of selecting examples in the iterative process, leading to a reduction of the variance of the stochastic gradient thus selected. Typically, the overhead associated with computing the sampling probabilities and with sampling  from the resulting distribution is negligible, and hence the net effect is speedup. In terms of theory, as in the case of minibatching, this strategy leads to the improvement of the leading constant in the complexity estimate, typically via replacing the maximum of certain data-dependent quantities by their average \cite{NSync, S2CD, IPROX-SDCA, Quartz, ImportanceSrebro, dfSDCA, AdaSDCA}.

Finally, and most recently, there has been a considerable amount of research activity due to the ground-breaking realization that one can gain the benefits of SGD (cheap iterations) without having to pay through the side effects mentioned above (e.g., halt in convergence due to decreasing stepsizes or increase of workload due to the use of minibatches). The result, in theory, is that for strongly convex losses (for example), one does not have to suffer sublinear convergence any more, but instead a fast linear rate ``kicks in''. In practice, these methods dramatically surpass all previous existing approaches.

The main algorithmic idea is to change the search direction itself, via a properly designed and cheaply maintainable {\em ``variance-reducing shift''} (control variate). Methods in this category are of two types: those operating in the primal space (i.e., directly on ERM) and those operating in a dual space (i.e., with the dual of the ERM problem).  Methods of the primal variety include SAG \cite{SAG}, SVRG \cite{SVRG}, S2GD \cite{S2GD},  proxSVRG \cite{proxSVRG}, SAGA \cite{SAGA}, mS2GD \cite{mS2GD} and MISO \cite{MISO}.  Methods of the dual variety work by updating randomly selected dual variables, which correspond to examples. These methods include SCD \cite{ShalevTewari11}, RCDM~\cite{Nesterov:2010RCDM, UCDC}, SDCA~\cite{SDCA}, Hydra~\cite{Hydra, Hydra2}, mSDCA~\cite{pegasos2}, APCG~\cite{APCG}, AsySPDC~\cite{WrightAsynchrous14}, RCD \cite{Necoara:rcdm-coupled},  APPROX \cite{APPROX}, SPDC~\cite{SPDC},  ProxSDCA~\cite{ProxSDCA}, ASDCA~\cite{ASDCA}, IProx-SDCA~\cite{IPROX-SDCA}, and QUARTZ~\cite{Quartz}.

\subsection{Combining strategies}

We wish to stress that the key strategies, mini-batching, importance sampling and variance-reducing shift, should be seen as orthogonal tricks, and as such they can be combined, achieving an amplification effect. For instance, the first primal variance-reduced method allowing for mini-batching was~\cite{mS2GD}; while dual-based methods in this category include~\cite{ASDCA,Quartz, dfSDCA}. Variance-reduced methods with importance sampling include~\cite{Nesterov:2010RCDM,UCDC,NSync, ALPHA} for general convex minimization problems, and~\cite{IPROX-SDCA, Quartz, ImportanceSrebro, dfSDCA} for ERM.

\section{Contributions}

Despite considerable effort of the machine learning and optimization research communities, no importance sampling for minibatches was previously proposed, nor analyzed.
The reason for this lies in the underlying theoretical and computational difficulties associated with the design and successful implementation of such a sampling. One needs to come up with a way to focus on a reasonable set of subsets  (minibatches) of the examples to be used in each iteration (issue: there are many subsets; which ones to choose?),  assign meaningful data-dependent non-uniform probabilities to them (issue: how?), and then be able to sample these subsets  according to the chosen distribution (issue: this could be computationally expensive). 

The tools that would enable one to consider these questions did not exist until recently. However, due to a recent line of work on analyzing variance-reduced methods  utilizing what is known as \emph{arbitrary sampling} \cite{NSync, Quartz, ALPHA, ESO, dfSDCA}, we are able to ask these questions and provide answers.
In this work we design a novel family of samplings---{\em bucket samplings}---and a particular member of this family---{\em importance sampling for minibatches}. We illustrate the power of this sampling in combination with the reduced-variance dfSDCA method for ERM. This method is a primal variant of SDCA, first analyzed by \citet{DualFreeSDCA}, and extended by \citet{dfSDCA} to the arbitrary sampling setting. However, our sampling can be combined with any stochastic method for ERM, such as SGD or S2GD, and extends beyond the realm of ERM, to  convex optimization problems in general. However, for simplicity, we do not discuss these extensions in this work.


We analyze the performance of the new sampling theoretically, and by inspecting the results we are able to comment on when can one expect to be able to benefit from it. We illustrate on synthetic datasets with varying distributions of example sizes that our approach  can lead to {\em dramatic speedups} when compared against standard (uniform) minibatching, of {\em one or more degrees of magnitude.} We then test our method on real datasets and confirm that the use of importance minibatching leads to up to an order of magnitude speedup. Based on our experiments and theory, we predict that for real data with particular shapes and distributions of example sizes,  importance sampling for minibatches will operate in a favourable regime, and can lead to speedup higher than one order of magnitude. 

\section{The Problem}

Let $\bX \in \R^{d \times n}$ be a data matrix in which features are represented in rows and examples in columns, and let $y \in \R^n$ be a vector of  labels corresponding to the examples. Our goal is to find a linear predictor $w\in \R^d$ such that $x_i^\top w \sim y_i$, where the pair $x_i,y_i \in \R^d \times \R$ is sampled from the underlying distribution over data-label pairs. In the L2-regularized Empirical Risk Minimization problem, we find $w$ by solving the  optimization problem
\begin{equation} \label{eq:problem}
\min_{w \in \R^d} \left[ P(w) := \frac{1}{n} \sum_{i=1}^n \phi_i(\bX_{:i}^\top w) + \frac{\lambda}{2} \|w\|_2^2 \right],
\end{equation}
where  $\phi_i:\R \rightarrow \R$ is a loss function associated with example-label pair $(\bX_{:i}, y_i)$, and $\lambda >0$. For instance, the square loss function is given by $\phi_i(t) = 0.5(t-y_i)^2$. Our results are not limited to L2-regularized problems though: an arbitrary strongly convex regularizer can be used instead \cite{Quartz}. We shall assume throughout that the loss functions are convex and $1/\gamma$-smooth, where $\gamma>0$. The latter means that for all $ x,y \in \R$ and all $i\in [n]:=\{1,2,\dots,n\}$, we have \[|\phi_i'(x) - \phi_i'(y)| \leq \frac{1}{\gamma}|x - y|.\] This setup includes ridge and logistic regression, smoothed hinge loss, and many other problems as special cases \cite{SDCA}. Again, our sampling can be adapted to settings with non-smooth losses, such as the hinge loss. 

\section{The Algorithm} 

In this paper we illustrate the power of our new sampling  in tandem with   Algorithm~\ref{alg:dfSDCA} (dfSDCA) for solving \eqref{eq:problem}.

\begin{algorithm}
\caption{dfSDCA~\cite{dfSDCA}} \label{alg:dfSDCA}
\begin{algorithmic}
\STATE \textbf{Parameters:} Sampling $\hat{S}$, stepsize $\theta>0$
\STATE \textbf{Initialization:} Choose $\alpha^{(0)} \in \R^n$, \\set $w^{(0)} = \frac{1}{\lambda n}\sum_{i=1}^n \bX_{:i}\alpha_i^{(0)},~p_i = \mathbf{Prob}(i \in \hat{S})$
\FOR{$t \geq 1$}
\STATE Sample a fresh random set $S_t$ according to $\hat{S}$
\FOR{$i \in S_t$}
\STATE $\Delta_i = \phi_i'(\bX_{:i}^\top w^{(t-1)}) + \alpha_i^{(t-1)}$
\STATE $\alpha_i^{(t)} = \alpha_i^{(t-1)} - \theta p_i^{-1}\Delta_i$
\ENDFOR
\STATE $w^{(t)} = w^{(t-1)} - \sum_{i\in S_t}\theta (n \lambda p_i)^{-1} \Delta_i \bX_{:i} $ 
\ENDFOR
\end{algorithmic}
\end{algorithm}

The method has two parameters. A ``sampling'' $\hat{S}$, which is a random set-valued mapping \cite{PCDM} with values being subsets of $[n]$, the set of examples. No assumptions are made on the distribution of $\hat{S}$ apart from requiring that $p_i$  is positive for each $i$, which simply means that each example has to have a chance of being picked. The second parameter is a stepsize $\theta$, which should be as large as possible, but not larger than a certain theoretically allowable maximum depending on $P$ and $\hat{S}$, beyond which the method could diverge.

Algorithm~\ref{alg:dfSDCA}  maintains $n$ ``dual'' variables, $\alpha_1^{(t)},\dots,\alpha_n^{(t)} \in \R$, which act as variance-reduction shifts. This is most easily seen in the case when we assume that $S_t=\{i\}$ (no minibatching). Indeed, in that case we have \[w^{(t)} = w^{(t-1)} - \frac{\theta}{n\lambda p_i}(g_{i}^{(t-1)} + \bX_{:i}\alpha_i^{(t-1)}),\] where $g_i^{(t-1)} \eqdef \bX_{:i}\Delta_i$ is the stochastic gradient. If $\theta$ is set to a proper value, as we shall see next, then it turns out that for all $i\in [n]$, $\alpha_i$ is converging $\alpha_i^*\eqdef -\phi_i'(\bX_{:i}^\top w^*)$, where $w^*$ is the solution to \eqref{eq:problem}, which means that the shifted stochastic gradient converges to zero. This means that its variance is progressively vanishing, and hence no additional strategies, such as decreasing stepsizes or minibatching are necessary to reduce the variance and stabilize the process.  In general, dfSDCA in each step picks a random subset of the examples, denoted as $S_t$,  updates  variables $\alpha_i^{(t)}$  for $i\in S_t$, and then uses these  to update the predictor $w$.

\subsection{Complexity of dfSDCA}

In order to state the theoretical properties of the method, we define \[ E^{(t)} \eqdef \frac{\lambda}{2}\|w^{(t)} - w^*\|_2^2 + \frac{\gamma}{2n} \| \alpha^{(t)} - \alpha^* \|_2^2. \]
Most crucially to this paper, we assume the knowledge of parameters $v_1, \dots, v_n>0$ for which the following ESO\footnote{ESO = Expected Separable Overapproximation \cite{PCDM, ESO}.} inequality holds
\begin{equation} \label{eq:ESO}
 \E \left[ \left\| \sum_{i \in S_t} h_i \bX_{:i}  \right\|^2 \right] \leq \sum_{i=1}^n p_i v_i h_i^2
\end{equation}
holds for all $h \in \R^n$. Tight and easily computable formulas for such parameters can be found in \cite{ESO}.  For instance, whenever $\Prob(|S_t|\leq \tau)=1$, inequality \eqref{eq:ESO} holds with $v_i = \tau \|\bX_{:i}\|^2$. However, this is a conservative choice of the parameters. Convergence of dfSDCA is described in the next theorem.

\begin{theorem}[\cite{dfSDCA}] \label{thm:dfSDCA}
Assume that all loss functions $\{\phi_i\}$ are convex and $1/\gamma$ smooth. If we run Algorithm~\ref{alg:dfSDCA} with parameter $\theta$ satisfying the inequality
\begin{equation} \label{eq:dfSDCA_theta}
\theta \leq \min_i \frac{p_i n \lambda \gamma}{ v_i + n \lambda \gamma},
\end{equation}
where $\{v_i\}$ satisfy \eqref{eq:ESO}, 
then  the potential $E^{(t)}$ decays exponentially to zero as
\[\E \left[E^{(t)}\right]
 \leq e^{-\theta t} E^{(0)}. \]
Moreover, if we set $\theta$ equal to the upper bound in \eqref{eq:dfSDCA_theta} so that
\begin{equation} \label{eq:complexity term}
 \frac{1}{\theta} = \max_i \left(\frac{1}{p_i}  + \frac{ v_i}{p_i n \lambda \gamma }\right)
\end{equation}
then 
\[ t\geq   \frac{1}{\theta} \log\left(\frac{(L+\lambda) E^{(0)}}{\lambda \epsilon}\right) \qquad \Rightarrow \qquad \E[P(w^{(t)}) - P(w^*)] \leq \epsilon.\]
\end{theorem}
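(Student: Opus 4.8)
The plan is to prove a one-step contraction of the form $\E[E^{(t)}\mid \alpha^{(t-1)}]\le(1-\theta)E^{(t-1)}$; iterating this via the tower rule and using $1-\theta\le e^{-\theta}$ then yields the first claim $\E[E^{(t)}]\le e^{-\theta t}E^{(0)}$. The key preparatory step is to record the identity $w^{(t)}-w^* = \frac{1}{n\lambda}\sum_{i=1}^n \bX_{:i}(\alpha_i^{(t)}-\alpha_i^*)$, where $\alpha_i^*\eqdef -\phi_i'(\bX_{:i}^\top w^*)$. This holds because the $w$- and $\alpha$-updates of Algorithm~\ref{alg:dfSDCA} preserve the relation $w^{(t)}=\frac{1}{n\lambda}\sum_i\bX_{:i}\alpha_i^{(t)}$ (true at $t=0$ by initialization), and because the first-order optimality condition $\nabla P(w^*)=0$ is exactly $w^*=\frac{1}{n\lambda}\sum_i\bX_{:i}\alpha_i^*$. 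Pairing this identity with $w^{(t-1)}-w^*$ gives the convenient consequence $\sum_i (\alpha_i^{(t-1)}-\alpha_i^*)(\bX_{:i}^\top w^{(t-1)}-\bX_{:i}^\top w^*)=n\lambda\|w^{(t-1)}-w^*\|_2^2$.

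Next I would compute the conditional expectation of each term of $E^{(t)}$ over the random set $S_t$. Writing $\Delta_i$ as in the algorithm and $d_i\eqdef\alpha_i^{(t-1)}-\alpha_i^*$, the $\alpha$-term is separable: since coordinate $i$ is updated with probability $p_i$, one gets $\E[(\alpha_i^{(t)}-\alpha_i^*)^2\mid\cdot]=d_i^2-2\theta\Delta_i d_i+\theta^2 p_i^{-1}\Delta_i^2$. For the $w$-term I would expand $\|w^{(t)}-w^*\|_2^2$, use $\E[w^{(t)}-w^{(t-1)}\mid\cdot]=-\frac{\theta}{n\lambda}\sum_i\Delta_i\bX_{:i}$ for the cross term, and bound the quadratic term $\E[\|\sum_{i\in S_t}h_i\bX_{:i}\|^2\mid\cdot]$ via the ESO inequality \eqref{eq:ESO} with $h_i=\theta(n\lambda p_i)^{-1}\Delta_i$, giving $\frac{\theta^2}{n^2\lambda^2}\sum_i\frac{v_i}{p_i}\Delta_i^2$. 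This is precisely the step where \eqref{eq:ESO} enters and where the sampling $\hat S$ influences the bound through $p_i$ and $v_i$.

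The crux is then to show that the collected cross and quadratic terms are dominated by $-\theta E^{(t-1)}$. Here I would invoke the cocoercivity of $\phi_i'$ implied by convexity and $1/\gamma$-smoothness, namely $(\phi_i'(a)-\phi_i'(b))(a-b)\ge\gamma(\phi_i'(a)-\phi_i'(b))^2$ with $a=\bX_{:i}^\top w^{(t-1)}$, $b=\bX_{:i}^\top w^*$; since $\phi_i'(a)-\phi_i'(b)=\Delta_i-d_i$, this controls the cross term arising from the $w$-update and, combined with the identity above, produces a clean $-\theta\lambda\|w^{(t-1)}-w^*\|_2^2$ contribution. After this substitution the inequality decouples across coordinates, reducing to a scalar quadratic inequality in $(\Delta_i,d_i)$ whose only free parameter is the coefficient $\frac{\theta}{p_i}(1+\frac{v_i}{n\lambda\gamma})$; this coefficient is at most $1$ exactly under the stepsize bound \eqref{eq:dfSDCA_theta}, and in that regime the scalar inequality collapses to the perfect square $-\tfrac12(\Delta_i-d_i)^2\le 0$. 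I expect this reduction — aligning the cross terms so that cocoercivity and the $w$--$\alpha$ identity together yield the per-coordinate square — to be the main obstacle; the remainder is bookkeeping.

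Finally, for the complexity statement I would use that $P$ is $(L+\lambda)$-smooth (each $\phi_i(\bX_{:i}^\top\cdot)$ is smooth and $\frac{\lambda}{2}\|\cdot\|_2^2$ contributes $\lambda$), so that $\nabla P(w^*)=0$ gives $P(w^{(t)})-P(w^*)\le\frac{L+\lambda}{2}\|w^{(t)}-w^*\|_2^2\le\frac{L+\lambda}{\lambda}E^{(t)}$, the last inequality using $E^{(t)}\ge\frac{\lambda}{2}\|w^{(t)}-w^*\|_2^2$. Taking expectations, combining with $\E[E^{(t)}]\le e^{-\theta t}E^{(0)}$, and solving $\frac{L+\lambda}{\lambda}e^{-\theta t}E^{(0)}\le\epsilon$ for $t$ yields exactly the stated iteration bound, with $1/\theta$ given by \eqref{eq:complexity term}.
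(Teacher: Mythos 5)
This theorem is quoted from the cited dfSDCA reference and the present paper gives no proof of it, so there is nothing internal to compare against; your reconstruction follows exactly the standard argument of that source (and of Shalev-Shwartz's dual-free SDCA analysis). Your steps all check out --- the maintained identity $w^{(t)}=\frac{1}{n\lambda}\sum_i \bX_{:i}\alpha_i^{(t)}$, the per-coordinate expectation of the $\alpha$-term, the ESO bound on the quadratic $w$-term, cocoercivity yielding the $-\theta\lambda\|w^{(t-1)}-w^*\|^2$ contribution (which even leaves slack over the needed $-\tfrac{\theta\lambda}{2}\|w^{(t-1)}-w^*\|^2$), and the per-coordinate collapse to $-\tfrac12(\Delta_i-d_i)^2\le 0$ under $\frac{\theta}{p_i}\bigl(1+\frac{v_i}{n\lambda\gamma}\bigr)\le 1$ --- so the proposal is correct and essentially identical to the original proof.
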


\section{Bucket Sampling}

We shall first explain the concept of ``standard'' importance sampling.

\subsection{Standard importance sampling}\label{subsec:SIS}

Assume that $\hat{S}$ always picks a single example only. In this case, \eqref{eq:ESO} holds for $v_i = \|\bX_{:i}\|^2$, independently of $p := (p_1, \dots, p_n)$ \cite{ESO}. This allows us to choose the sampling probabilities as $p_i \sim v_i + n\lambda\gamma$, which ensures that   \eqref{eq:complexity term} is minimized. This is {\em importance sampling.}
The number of iterations of dfSDCA is in this case proportional to 
\[\frac{1}{\theta^{(\text{imp})}} \eqdef n +  \frac{\sum_{i=1}^n v_i}{n \lambda\gamma}.\] If uniform probabilities are used, the average in the above formula gets replaced by the maximum: \[\frac{1}{\theta^{\text{(unif)}}} \eqdef n +  \frac{\max_i v_i}{ \lambda\gamma}.\] Hence, one should expect the following {\em speedup} when comparing the importance and uniform samplings:
\begin{equation}\label{eq:sigma} \sigma \eqdef \frac{\max_i \|\bX_{:i}\|^2}{\frac{1}{n}\sum_{i=1}^n \|\bX_{:i}\|^2}.\end{equation}
If $\sigma=10$ for instance, then dfSDCA with importance sampling is 10$\times$ faster than dfSDCA with uniform sampling.

%

\subsection{Uniform minibatch sampling}

In machine learning, the term ``minibatch'' is virtually synonymous with a special sampling, which we shall here refer to by the name $\tau$-nice sampling \cite{PCDM}. Sampling $\hat{S}$ is $\tau$-nice if it picks uniformly at random from the collection of all subsets of $[n]$ of cardinality $\tau$. Clearly, $p_i=\tau/n$ and, moreover, it was show by \citet{ESO} that \eqref{eq:ESO} holds with $\{v_i\}$ defined by 
\begin{equation} \label{def:taunice_vi}
 v_i^{(\tau\text{-nice})} = \sum_{j=1}^d \left( 1 + \frac{(|J_j| - 1)(\tau - 1)}{n-1}\right)\bX_{ji}^2,
\end{equation} where $J_j \eqdef \{ i \in [n] :\bX_{ji} \neq 0 \}.$ In the case of $\tau$-nice sampling we have the stepsize  and complexity given by \begin{equation} \label{def:theta_taunice}
\theta^{(\tau\text{-nice})} = \min_{i}\frac{\tau \lambda \gamma}{v_i^{(\tau\text{-nice})} + n \lambda\gamma},
\end{equation}  \begin{equation}\label{eq:complexity_taunice}
\frac{1}{\theta^{(\tau\text{-nice})}} = \frac{n}{\tau} + \frac{\max_i v_i^{(\tau\text{-nice})}}{\tau\lambda\gamma}.
\end{equation} 

Learning from the difference between the uniform and importance sampling of single example (Section~\ref{subsec:SIS}), one would ideally wish the importance minibatch sampling, which we are yet to define, to lead to complexity of the type \eqref{eq:complexity_taunice}, where the maximum is replaced by an average. 

\subsection{Bucket sampling: definition}

We now propose a family of samplings, which we  call \textit{bucket samplings}. Let $B_1,\dots, B_\tau$ be a partition of $[n]=\{1,2,\dots,n\}$ into $\tau$ nonempty sets (``buckets''). 

\begin{definition}[Bucket sampling] We say that $\hat{S}$ is a bucket sampling if for all $i \in [\tau]$, $|\hat{S}\cap B_i| = 1$ with probability 1.
\end{definition}

Informally, a bucket sampling picks one example from each of the $\tau$ buckets, forming a minibatch. Hence, $|\hat{S}|=\tau$ and 
$\sum_{i\in B_l} p_i =1$ for each $l=1,2\dots,\tau$, where, as before, $p_i \eqdef \textbf{Prob}(i \in \hat{S})$. Notice that given the partition, the vector $p=(p_1,\dots,p_n)$ {\em uniquely determines} a bucket sampling. Hence, we have a family of samplings indexed by a single $n$-dimensional vector. Let  ${\cal P}_B$ be the set of all vectors $p\in \R^n$ describing bucket samplings associated with partition $B= \{B_1,\dots,B_\tau\}$. Clearly,
\[ {\cal P}_B = \left\{p\in \R^n : \sum_{i\in B_l} p_i =1 \text{ for all } l \;\&\;  p_i\geq 0 \text{ for all } i\right\}.\]

\subsection{Optimal bucket sampling}

The optimal bucket sampling is that for which \eqref{eq:complexity term} is minimized, which leads to a complicated optimization problem:
\[ \min_{p\in {\cal P}_B}   \max_i \frac{1}{p_i}  + \frac{ v_i}{p_i n \lambda \gamma }\quad\text{ subject to } \{v_i\} \text{ satisfy } \eqref{eq:ESO}.\]
A particular difficulty here is the fact that the parameters $\{v_i\}$ depend on the vector $p$ in a complicated way.  In order to resolve this issue, we prove the following result.

\begin{theorem}\label{thm:ESO} Let $\hat{S}$ be a bucket sampling described by partition $B=\{B_1,\dots,B_{\tau}\}$ and vector $p$. Then the ESO inequality \eqref{eq:ESO} holds for parameters $\{v_i\}$ set to
\begin{equation}\label{eq:new_v}  v_i = \sum_{j=1}^d  \left(1 + \left(1-\tfrac{1}{\omega'_j}\right) \delta_{j}\right)  \mathbf{X}^2_{ji},\end{equation}
where    $J_j \eqdef \{i\in [n] \;:\; \mathbf{X}_{ji} \neq 0\}$, $\delta_{j} \eqdef \sum_{i\in J_j} p_i$ and  $\omega'_j \eqdef \left| \left\{ l \;:\; J_j\cap B_l \neq \emptyset \right\}\right|$.
\end{theorem}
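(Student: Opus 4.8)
The plan is to expand the squared norm on the left-hand side of \eqref{eq:ESO} row by row and to treat each row $j$ separately. Writing $Z_j \eqdef \sum_{i\in S_t} h_i \bX_{ji}$, we have $\|\sum_{i\in S_t} h_i \bX_{:i}\|^2 = \sum_{j=1}^d Z_j^2$, so it suffices to bound $\E[Z_j^2]$ for each $j$ and then sum. The structural fact I would use is that a bucket sampling picks one index from each bucket $B_l$ \emph{independently} across buckets -- this is precisely what makes the vector $p$ determine the sampling uniquely. Hence $Z_j = \sum_{l=1}^\tau Y_l$, where $Y_l \eqdef h_{s_l}\bX_{j s_l}$ depends only on the index $s_l$ chosen from $B_l$, and the $Y_1,\dots,Y_\tau$ are mutually independent.

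First I would compute the per-row second moment. Setting $\mu_l \eqdef \E[Y_l] = \sum_{i\in B_l} p_i h_i \bX_{ji}$ and using $\E[Y_l^2] = \sum_{i\in B_l} p_i h_i^2 \bX_{ji}^2$, independence gives
\[
\E[Z_j^2] = \sum_{l=1}^\tau \E[Y_l^2] + \sum_{l\neq m}\mu_l\mu_m = A_j + M^2 - Q,
\]
where $A_j \eqdef \sum_{i\in J_j} p_i h_i^2 \bX_{ji}^2$, $M \eqdef \sum_l \mu_l$, and $Q \eqdef \sum_l \mu_l^2$. Note that only the $\omega'_j$ buckets meeting $J_j$ contribute a nonzero $\mu_l$, since $\mu_l=0$ whenever $\bX_{ji}=0$ for all $i\in B_l$.

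The heart of the argument, and the step I expect to be the main obstacle, is to show $M^2 - Q \leq (1 - \tfrac{1}{\omega'_j})\delta_j A_j$; the subtlety is that the negative ``variance'' term $-Q$ is exactly what converts the naive factor $\delta_j$ into the required $(1-\tfrac{1}{\omega'_j})\delta_j$. I would obtain this by combining \emph{two} applications of Cauchy--Schwarz. Applying it over all of $J_j$ gives $M^2 = \left(\sum_{i\in J_j} p_i h_i \bX_{ji}\right)^2 \leq \left(\sum_{i\in J_j} p_i\right)\left(\sum_{i\in J_j} p_i h_i^2\bX_{ji}^2\right) = \delta_j A_j$. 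Applying it instead over the $\omega'_j$ nonzero bucket means $\mu_l$ gives $M^2 \leq \omega'_j Q$, i.e. $Q \geq M^2/\omega'_j$. Substituting the second bound yields $M^2 - Q \leq (1 - \tfrac{1}{\omega'_j})M^2$, and then the first bound gives $(1-\tfrac{1}{\omega'_j})M^2 \leq (1-\tfrac{1}{\omega'_j})\delta_j A_j$, as needed (the case $\omega'_j = 1$ is trivial, since then $M^2 = Q$ and both sides vanish).

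Finally I would assemble the pieces into $\E[Z_j^2] \leq A_j + (1 - \tfrac{1}{\omega'_j})\delta_j A_j = \left(1 + (1 - \tfrac{1}{\omega'_j})\delta_j\right) \sum_{i\in J_j} p_i h_i^2\bX_{ji}^2$, where I have pulled the constant $\delta_j$ out of the sum. Summing over $j$ and swapping the order of summation gives
\[
\sum_{j=1}^d \E[Z_j^2] \leq \sum_{i=1}^n p_i h_i^2 \sum_{j=1}^d \left(1 + \left(1 - \tfrac{1}{\omega'_j}\right)\delta_j\right)\bX_{ji}^2 = \sum_{i=1}^n p_i v_i h_i^2,
\]
which is exactly \eqref{eq:ESO} with $\{v_i\}$ defined as in \eqref{eq:new_v}. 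A consistency check worth noting: for $\tau=1$ (single bucket, single example picked with probability $p_i$) every relevant $\omega'_j$ equals $1$, so the formula collapses to $v_i=\sum_j \bX_{ji}^2 = \|\bX_{:i}\|^2$, recovering the standard importance-sampling parameters of Section~\ref{subsec:SIS}.
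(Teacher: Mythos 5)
Your proof is correct, and its mathematical core coincides with the paper's: the row-by-row decomposition $\E\bigl[\|\sum_{i\in S_t}h_i\bX_{:i}\|^2\bigr]=\sum_{j}\E[Z_j^2]$ is exactly what the paper obtains by invoking Theorem 5.2 of the ESO reference, and your two Cauchy--Schwarz steps, $M^2\leq \omega'_j Q$ and $M^2\leq \delta_j A_j$, are precisely the scalar content of the paper's two auxiliary lemmas, $\bP(J)\circ\bB\succeq \tfrac{1}{\omega'_J}\bP(J)$ and $\bP(J)\circ pp^\top\preceq \bigl(\sum_{i\in J}p_i\bigr)\Diag(\bP(J\cap\hat{S}))$. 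The difference is packaging rather than substance: the paper routes the argument through probability matrices, Hadamard products and the normalized eigenvalue $\lambda'(\bP(J_j\cap\hat{S}))$, outsourcing the reduction to an external theorem, whereas your version is fully self-contained and elementary --- arguably a cleaner exposition that also makes transparent why the $-Q$ term is what buys the factor $1-1/\omega'_j$. One point you are right to single out: the paper's definition of a bucket sampling only requires $|\hat{S}\cap B_l|=1$ almost surely, and your argument needs the choices in distinct buckets to be mutually independent. The paper uses the same assumption implicitly (its Lemma~\ref{lem:09hssss} sets $\bP_{ij}=p_ip_j$ for cross-bucket pairs, and the claim that $p$ uniquely determines the sampling presupposes independence), so your proof is consistent with the intended definition, but a careful write-up should state this hypothesis explicitly.
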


Observe that $J_j$ is the set of examples which express feature $j$, and $\omega'_j$ is the number of buckets intersecting with $J_j$. Clearly, that $1 \leq \omega'_j \leq \tau$ (if $\omega'_j=0$, we simply discard this feature from our data as it is not needed). Note that the effect of the quantities $\{\omega'_j\}$ on the value of $v_i$ is small. Indeed, unless we are in the extreme situation when $\omega'_j=1$, which has the effect of neutralizing $\delta_{j}$, the quantity $1-1/\omega'_j$ is between $1-1/2$ and $1-1/\tau$. Hence, for simplicity, we could instead use the slightly more conservative parameters: \[ v_i = \sum_{j=1}^d  \left(1 + \left(1-\frac{1}{\tau}\right) \delta_{j}\right)  \mathbf{X}^2_{ji} \].

\subsection{Uniform bucket sampling} 

Assume all buckets are of the same size: $|B_l|=n/\tau$ for all $l$.
Further, assume that  $p_i = 1/|B_l| = \tau/n$ for all $i$. Then $\delta_{j} = \tau |J_j| / n$, and hence Theorem~\ref{thm:ESO} says that  
\begin{equation}\label{eq:v_unif_bucket} v_i^{(\text{unif})} = \sum_{j=1}^d  \left(1 + \left(1-\frac{1}{\omega'_j}\right) \frac{\tau |J_j|}{n} \right)  \mathbf{X}^2_{ji},\end{equation}
and in view of \eqref{eq:complexity term}, the complexity of dfSDCA with this sampling becomes
\begin{equation}\label{eq:unif} \frac{1}{\theta^{(\text{unif})}} = \frac{n}{\tau} +\frac{\max_i  v_i^{(\text{unif})}}{\tau \lambda\gamma}.\end{equation}
Formula~\eqref{def:taunice_vi} is very similar  to the one for $\tau$-nice sampling \eqref{eq:v_unif_bucket}, despite the fact that the sets/minibatches generated by the uniform bucket sampling have a special structure with respect to the buckets.  Indeed, it is easily seen that the difference between between $ 1+ \tfrac{\tau |J_j|}{n} $ and $ 1+ \tfrac{(\tau-1) (|J_j|-1)}{(n-1)} $ is negligible. Moreover,  if either $\tau=1$ or $|J_j|=1$ for all $j$, then  $\omega_j' = 1$ for all $j$ and hence $v_i = \|\mathbf{X}_{:i}\|^2$. This is also what we get for the $\tau$-nice sampling.

 \section{Importance Minibatch Sampling}

In the light of Theorem~\ref{thm:ESO}, we can formulate  the problem of searching for the optimal bucket sampling as
\begin{equation}\label{eq:OPT_SOLVE}  \min_{p\in {\cal P}_B}   \max_i \frac{1}{p_i}  + \frac{ v_i}{p_i n \lambda \gamma }\quad \text{ subject to } \{v_i\} \text{ satisfy } \eqref{eq:new_v} .\end{equation}

Still, this is not an easy problem.  {\em Importance minibatch sampling} arises as an approximate solution of \eqref{eq:OPT_SOLVE}. Note that the uniform minibatch sampling is a feasible solution of the above problem, and hence we should be able to improve upon its performance.

\subsection{Approach 1: alternating optimization}

Given a probability distribution $p\in {\cal P}_B$, we can easily find  $v$ using Theorem~\ref{thm:ESO}. On the other hand, for any fixed $v$, we can minimize \eqref{eq:OPT_SOLVE}  over $p\in {\cal P}_B$ by choosing the probabilities in each group $B_l$ and for each $i\in B_l$ via \begin{equation}\label{eq:imp_prob}
 p_i = \frac{n\lambda\gamma + v_i }{\sum_{j \in B_l} n\lambda\gamma + v_j}.
\end{equation}
This leads to a natural alternating optimization strategy. Eventually, this strategy often (in experiments) converges to a pair $(p^*,v^*)$ for which \eqref{eq:imp_prob} holds. Therefore, the resulting complexity will be \begin{equation}\label{eq:complexity_imp}
\frac{1}{\theta^{(\tau\text{-imp})}} = \frac{n}{\tau} + \max_{l \in [\tau]} \frac{\frac{\tau}{n}\sum_{i \in B_l}v_i^*}{\tau\lambda\gamma}.
\end{equation}
We can compare this result against the complexity of $\tau$-nice in \eqref{eq:complexity_taunice}. We can observe that the terms are very similar, up to two differences. First, the importance minibatch sampling  has a maximum over group averages instead of a maximum over everything, which leads to speedup, other things equal.  On the other hand, $v^{(\tau\text{-nice})}$ and $v^*$ are different quantities.  The alternating optimization procedure for computation of $(v^*, p^*)$ is costly, as one iteration takes a pass over all data. Therefore, in the next subsection we propose a  closed form formula which, as we found  empirically, offers nearly optimal  convergence rate.

\subsection{Approach 2: practical formula}

 \label{sec:first_attempt}  For each group $B_l$, let us choose for all $i\in B_l$ the probabilities as follows:
\begin{equation}\label{eq:hoiho89898}\boxed{ p_i^* = \frac{ n\lambda \gamma + v_i^{(\text{unif})}}{\sum_{k \in B_l} n\lambda \gamma + v_k^{(\text{unif})} }}\end{equation}
where $v_i^{(\text{unif})}$ is given by \eqref{eq:v_unif_bucket}.  After doing some simplifications, the associated complexity result is
\[ \frac{1}{\theta^{(\tau\text{-imp})}} = 
\max_{l}  \left\{  \left(\frac{n}{\tau} + \frac{ \frac{\tau}{n}\sum_{i\in B_l} v_i^{(\text{unif})}}{\tau \lambda \gamma}  \right) \beta_l \right\},\]
where \[\beta_l \eqdef \max_{i\in B_l}  \frac{n\lambda \gamma + s_i}{n\lambda \gamma + v_i^{(\text{unif})}}\] and 
\[s_i \eqdef \sum_{j=1}^d  \left(1 +  \left(1-\frac{1}{\omega_j'}\right)\sum_{k\in J_j} p_k^*\right) \mathbf{X}^2_{ji}.\] We would ideally want to have  $\beta_l = 1$  for all $l$ (this is what we get for  importance sampling without minibatches).  If $\beta_l \approx 1$ for all $l$, then the complexity $1/\theta^{(\tau\text{-imp})}$  is an improvement on the complexity of the uniform minibatch sampling since the maximum of group averages  is always better than the maximum of all elements $v_i^{(\text{uni})}$:
\[  \frac{n}{\tau} + \frac{\max_l \left( \frac{\tau}{n}\sum_{i\in B_l} v_i^{\text{(unif)}}\right)}{\tau \lambda \gamma}  \leq \frac{n}{\tau} + \frac{\max_i  v_i^{\text{(unif)}}}{\tau \lambda \gamma}.\]
Indeed, the difference can be very large. 


\section{Experiments}

We now comment on the results of our numerical experiments, with both synthetic and real  datasets. We plot the optimality gap $P(w^{(t)}) - P(w^*)$ (vertical axis) against the computational effort (horizontal axis). We measure computational effort by the number of effective passes through the data  divided by $\tau$. We divide by  $\tau$ as a normalization factor; since we shall compare methods with a range of values of $\tau$. This is reasonable as it simply indicates that the $\tau$  updates are performed in parallel. Hence, what we plot is an implementation-independent model for time.

We compared two algorithms: \begin{itemize} 
\item[1)] \textbf{$\tau$-nice}: dfSDCA using the $\tau$-nice sampling with stepsizes given by \eqref{def:theta_taunice} and \eqref{def:taunice_vi}, 
\item[2)] \textbf{$\tau$-imp}: dfSDCA using $\tau$-importance sampling (i.e., importance minibatch sampling) defined in Subsection~\ref{sec:first_attempt}. \end{itemize} For each dataset we provide two plots. In the left figure we plot the convergence of $\tau$-nice for different values of $\tau$, and in the right figure we do the same for $\tau$-importance. The horizontal axis has the same range in both plots, so they are easily comparable. The values of $\tau$ we used to plot are $\tau \in \{ 1,2,4,8,16,32\}$. In all  experiments we used the  logistic loss: $\phi_i(z) = \log(1 + e^{-y_i z})$ and set the regularizer to $\lambda	 = \max_i \|\bX_{:i}\| /n$.  We will observe the theoretical and empirical ratio $\theta^{(\tau\text{-imp})}/\theta^{(\tau\text{-nice})}$. The theoretical ratio is computed from the corresponding theory. The empirical ratio is the ratio between the horizontal axis values at the moments when the algorithms reached the precision $10^{-10}$.

\subsection{Artificial data}

We start with experiments using artificial data, where we can control the sparsity pattern of $\bX$ and the distribution of $\{\|\bX_{:i}\|^2\}$. We fix $n = 50,000$ and choose $d = 10,000$ and $d = 1,000$. For each feature we sampled a random sparsity coefficient $\omega'_i \in [0,1]$ to have the average sparsity  $\omega' \eqdef \frac{1}{d}\sum_{i}^d \omega'_i$ under control. We used two different regimes of sparsity: $\omega' =0.1$ (10\% nonzeros) and $\omega'=0.8$ (80\% nonzeros).  After deciding on the sparsity pattern, we rescaled the examples to match a specific distribution of norms $L_i=\|\bX_{:i}\|^2$; see Table~\ref{tab:dists}. The code column shows the corresponding code in Julia to create the vector of norms $L$. The distributions can be also observed as histograms in Figure~\ref{fig:hists}.

\setlength\tabcolsep{5pt}

\begin{table}[!ht]

\centering 
\vskip 0.3cm
\begin{tabular}{|l|l|l|}
\hline
label     & code & $\sigma$\\ \hline
extreme &  L = ones(n);L[1] = 1000 & 980.4   \\ \hline
chisq1	& L = rand(chisq(1),n) & 17.1 \\ \hline
chisq10	& L = rand(chisq(10),n) & 3.9\\ \hline
chisq100	& L = rand(chisq(100),n)& 1.7 \\ \hline
uniform & L = 2*rand(n) & 2.0 \\ \hline
\end{tabular}
\caption{Distributions of $\|\bX_{:i}\|^2$ used in artificial experiments.}
\label{tab:dists}
\end{table}

\begin{figure}[!ht]
\hfill
\subfigure[uniform]{\includegraphics[width = 0.32\columnwidth]{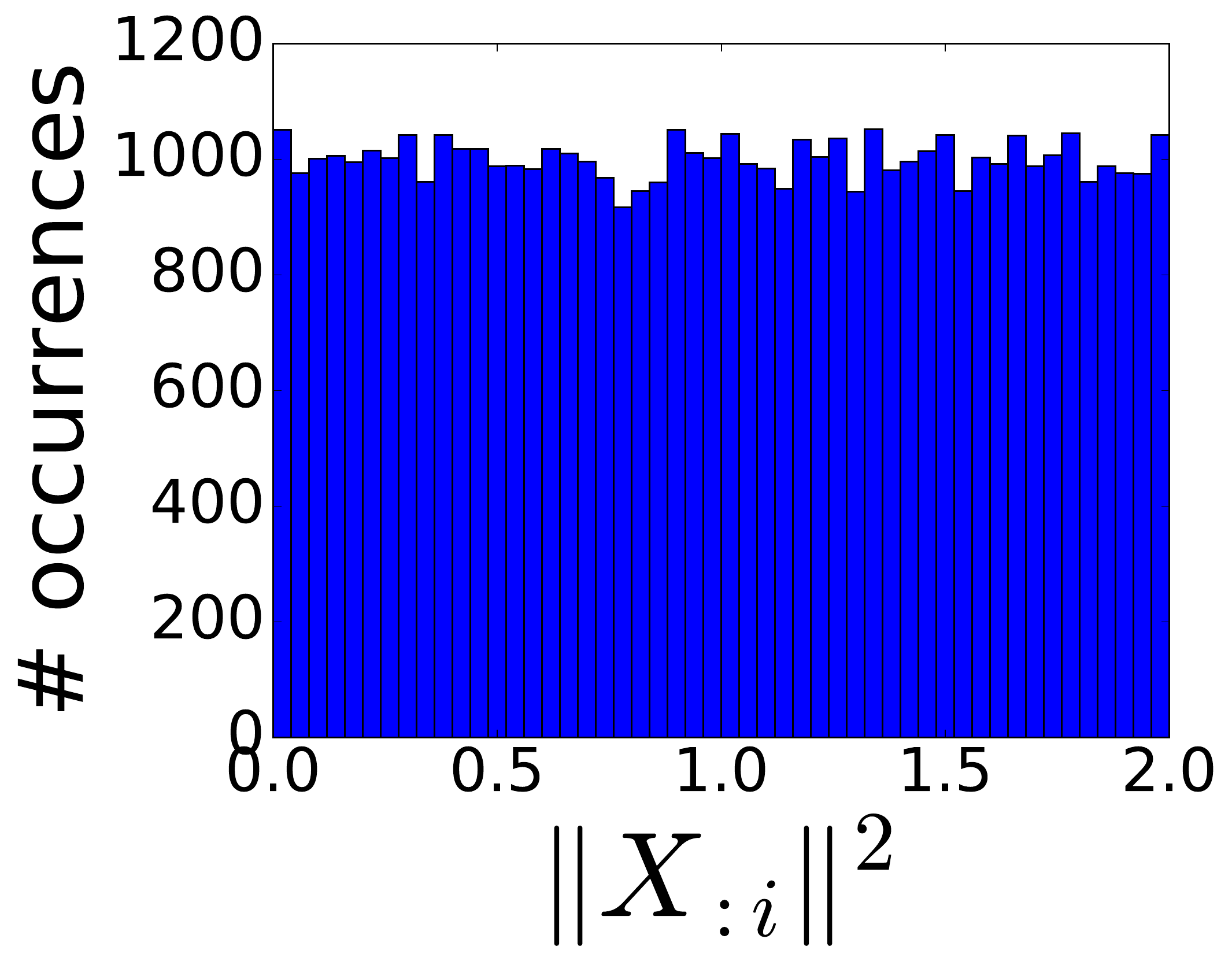}}
\hfill
\subfigure[chisq100]{\includegraphics[width = 0.32\columnwidth]{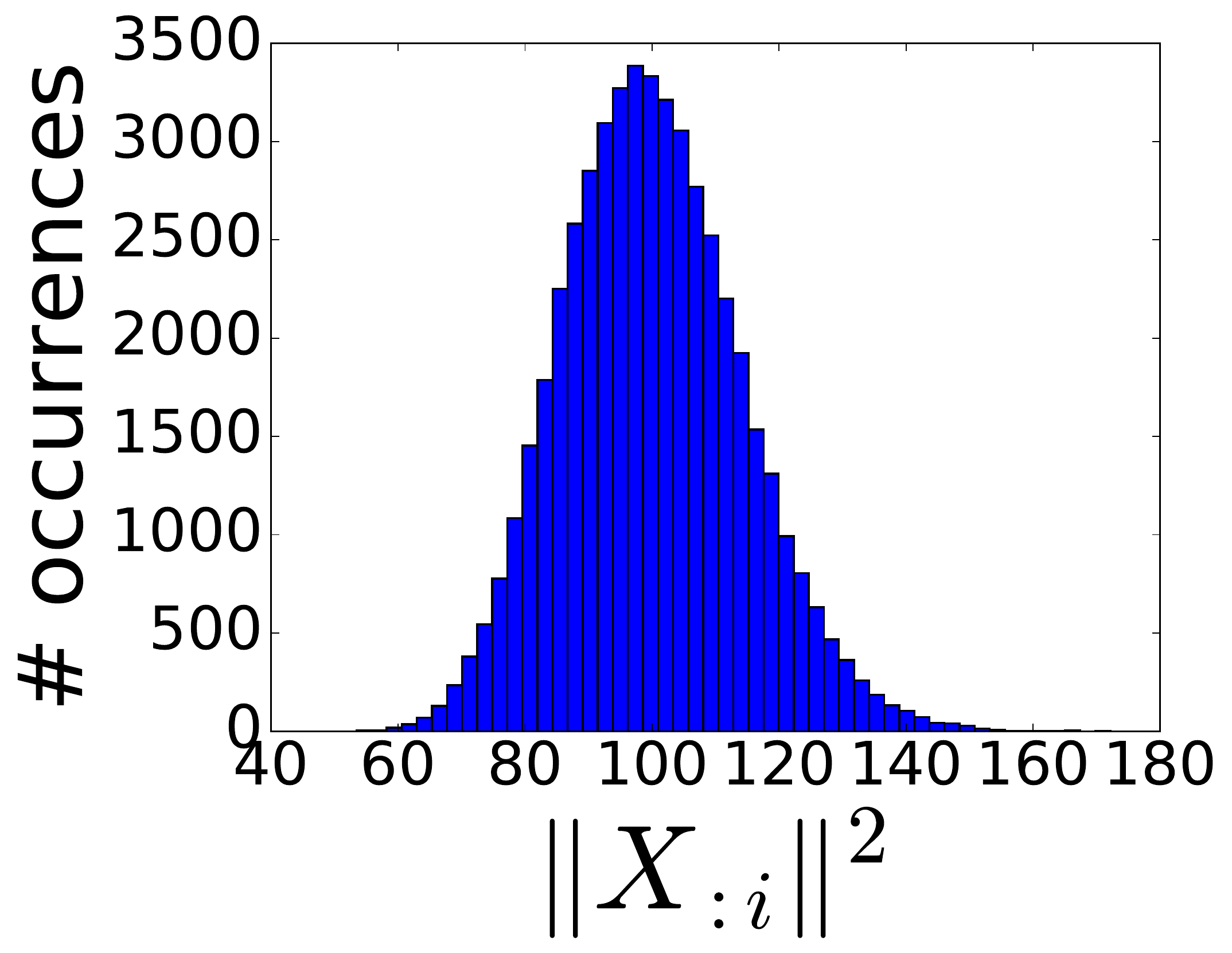}}
\hfill
\subfigure[chisq10]{\includegraphics[width = 0.32\columnwidth]{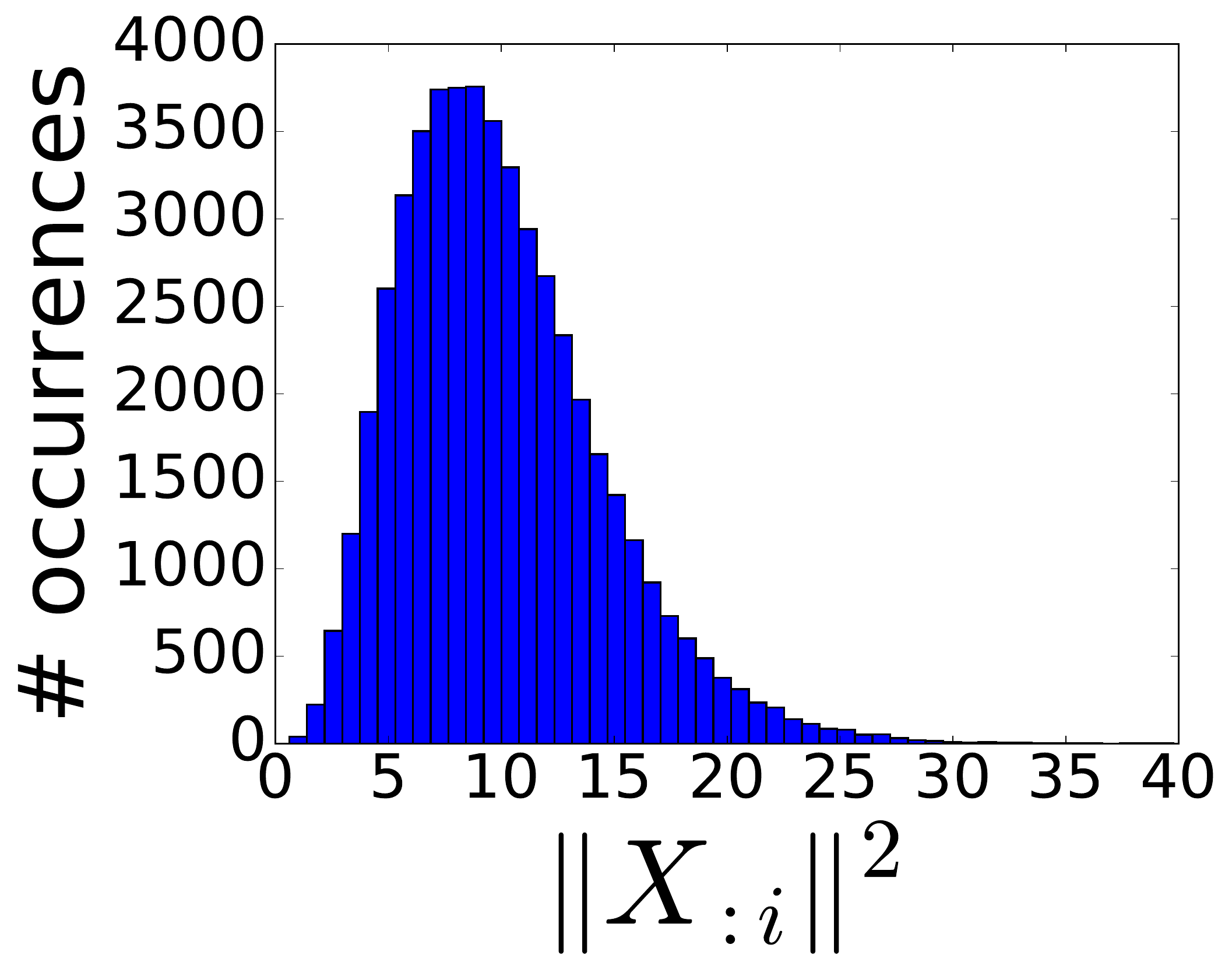}}
\centering{
\subfigure[chisq1]{\includegraphics[width = 0.32\columnwidth]{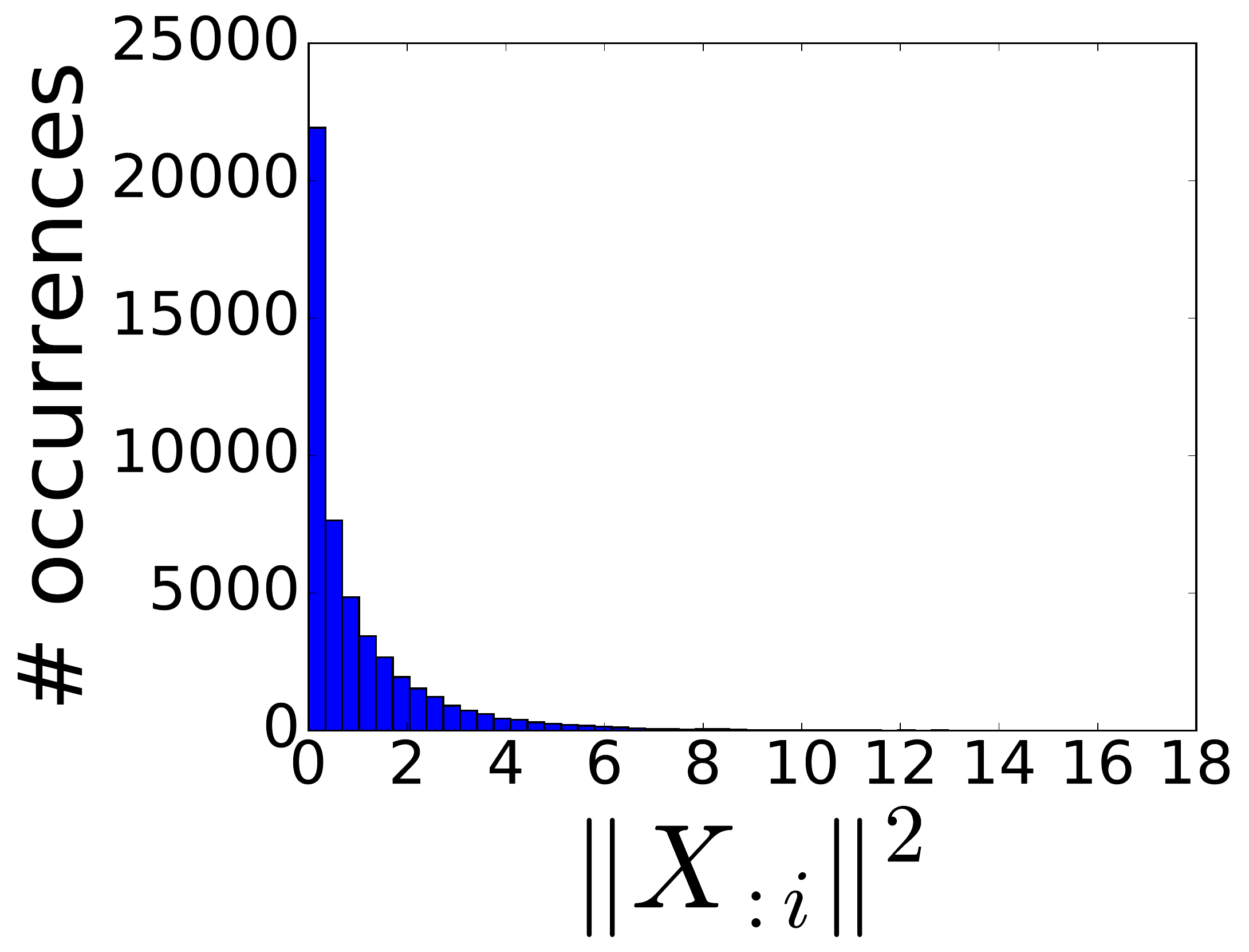}}
\subfigure[extreme]{\includegraphics[width = 0.32\columnwidth]{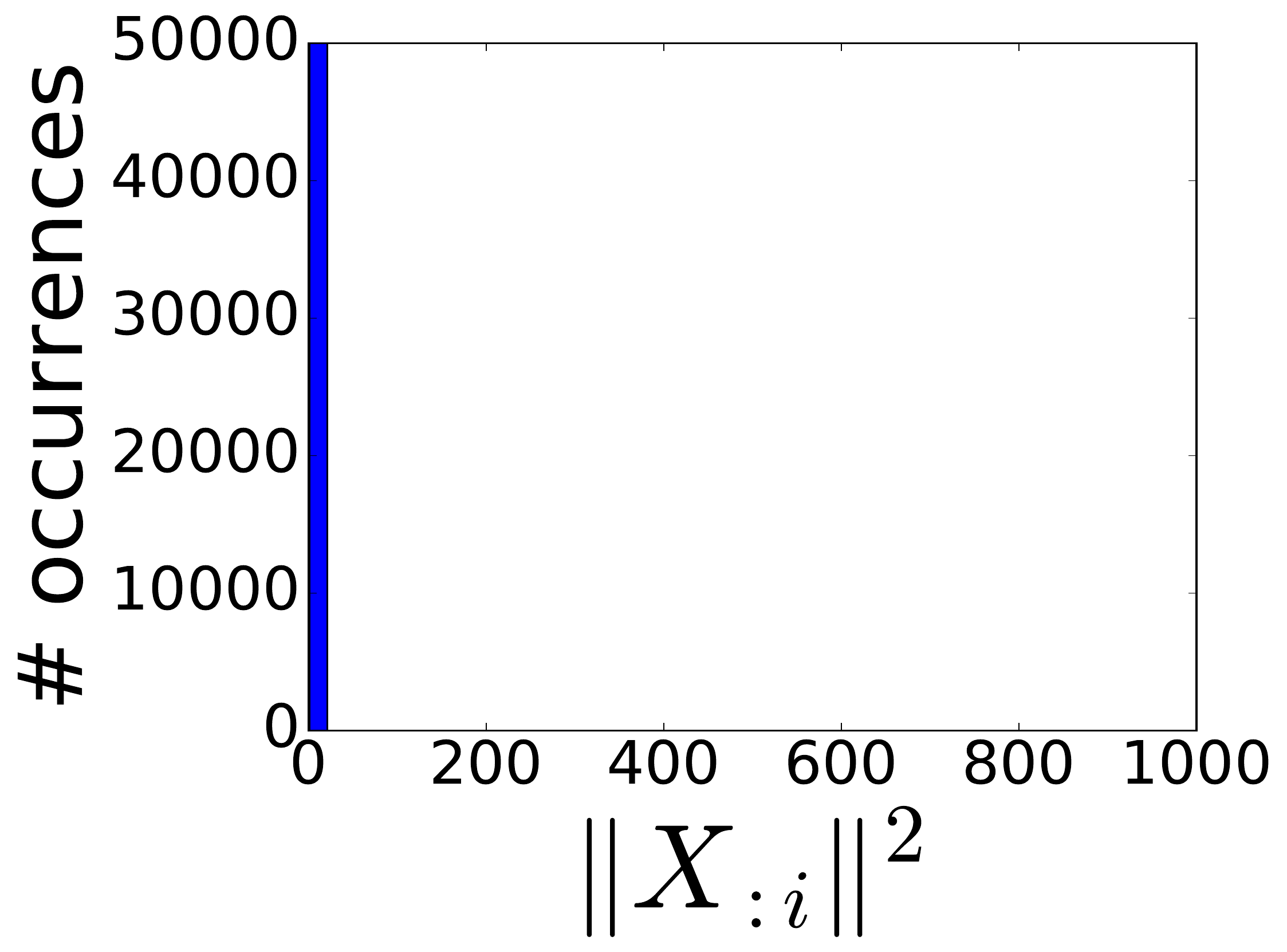}}}
\caption{The distribution of $\|\bX_{:i}\|^2$ for synthetic data}
\label{fig:hists}
\end{figure} 

The corresponding experiments can be found in Figure~\ref{fig:artificial_0.8} and Figure~\ref{fig:artificial_0.1}. The theoretical and empirical speedup are also summarized in Tables~\ref{tab:artificial_data_0.1} and \ref{tab:artificial_data_0.8}.

\begin{table}[!ht]
\centering 
\begin{tabular}{|l|c|c|c|c|c|c|}
\hline
Data & $\tau=1$         & $\tau=2$        & $\tau=4$        & $\tau=8$         & $\tau=16$         & $\tau=32$  \\ \hline
uniform     & 1.2~:~1.0	    & 1.2~:~1.1	   & 1.2~:~1.1    & 1.2~:~1.1     & 1.3~:~1.1      & 1.4~:~1.1    \\ \hline
chisq100    & 1.5~:~1.3	    & 1.5~:~1.3     & 1.5~:~1.4     & 1.6~:~1.4     & 1.6~:~1.4      & 1.6~:~1.4    \\ \hline
chisq10     & 1.9~:~1.4     & 1.9~:~1.5     & 2.0~:~1.4     & 2.2~:~1.5     & 2.5~:~1.6      & 2.8~:~1.7 \\ \hline
chisq1     	& 1.9~:~1.4     & 2.0~:~1.4     & 2.2~:~1.5     & 2.5~:~1.6     & 3.1~:~1.6      & 4.2~:~1.7  \\ \hline
extreme	    & 8.8~:~4.8	    & 9.6~:~6.6	   & 11~:~6.4	  & 14~:~6.4     & 20~:~6.9      & 32~:~6.1 \\ \hline
\end{tabular}
\caption{ The \textbf{theoretical} ~:~ \textbf{empirical} ratios $\theta^{(\tau\text{-imp})}/\theta^{(\tau\text{-nice})}$ for sparse artificial data ($\omega' = 0.1$)}
\label{tab:artificial_data_0.1}
\end{table}

\begin{table}[!ht]

\centering 

\begin{tabular}{|l|c|c|c|c|c|c|}
\hline
Data & $\tau=1$         & $\tau=2$        & $\tau=4$        & $\tau=8$         & $\tau=16$         & $\tau=32$ \\ \hline
uniform     & 1.2~:~1.1	    & 1.2~:~1.1	   & 1.4~:~1.2    & 1.5~:~1.2     & 1.7~:~1.3      & 1.8~:~1.3    \\ \hline
chisq100    & 1.5~:~1.3	    & 1.6~:~1.4     & 1.6~:~1.5     & 1.7~:~1.5     & 1.7~:~1.6      & 1.7~:~1.6    \\ \hline
chisq10     & 1.9~:~1.3     & 2.2~:~1.6     & 2.7~:~2.1     & 3.1~:~2.3     & 3.5~:~2.5      & 3.6~:~2.7 \\ \hline
chisq1     	& 1.9~:~1.3     & 2.6~:~1.8     & 3.7~:~2.3     & 5.6~:~2.9     & 7.9~:~3.2      & 10~:~3.9  \\ \hline
extreme	    & 8.8~:~5.0	    & 15~:~7.8	   & 27~:~12	  & 50~:~16     & 91~:~21      & 154~:~28 \\ \hline
\end{tabular}
\caption{The \textbf{theoretical}~:~\textbf{empirical} ratios $\theta^{(\tau\text{-imp})}/\theta^{(\tau\text{-nice})}$. Artificial data with $\omega' = 0.8$ (dense)}
\label{tab:artificial_data_0.8}
\end{table}

\subsection{Real data}

We used several publicly available datasets\footnote{https://www.csie.ntu.edu.tw/~cjlin/libsvmtools/datasets/}, summarized in Table~\ref{tab:datasets}.  Experimental results are in Figure~\ref{fig:real}. The theoretical and empirical speedup table for these datasets  can be found in Table~\ref{tab:real_data}.

\begin{table}[!ht]
\centering 
\begin{tabular}{|l|l|l|l|l|}
\hline
Dataset     & \#samples & \#features & sparsity & $\sigma$ \\ \hline
ijcnn1	    & 35,000	& 23		 & 60.1\%	 & 2.04 \\ \hline
protein     & 17,766    & 358        & 29.1\%  & 1.82   \\ \hline
w8a         & 49,749    & 301        & 4.2\%   & 9.09 \\ \hline
url			& 2,396,130 & 3,231,962 & 0.04 \% & 4.83 \\ \hline
aloi        & 108,000	& 129		 & 24.6\%  & 26.01    \\ \hline
\end{tabular}
\caption{ Summary of real data sets ($\sigma$ = predicted speedup).}
\label{tab:datasets}
\end{table}

\begin{table}[!ht]
\centering 
\begin{tabular}{|l|c|c|c|c|c|c|}
\hline
 Data & $\tau=1$         & $\tau=2$        & $\tau=4$        & $\tau=8$         & $\tau=16$         & $\tau=32$ \\ \hline
ijcnn1	    & 1.2~:~1.1	    & 1.4~:~1.1	   & 1.6~:~1.3	  & 1.9~:~1.6     & 2.2~:~1.6      & 2.3~:~1.8 \\ \hline
protein     & 1.3~:~1.2     & 1.4~:~1.2     & 1.5~:~1.4     & 1.7~:~1.4     & 1.8~:~1.5      & 1.9~:~1.5 \\ \hline
w8a         & 2.8~:~2.0     & 2.9~:~1.9     & 2.9~:~1.9     & 3.0~:~1.9     & 3.0~:~1.8      & 3.0~:~1.8 \\ \hline
url         & 3.0~:~2.3    & 2.6~:~2.1     & 2.0~:~1.8     & 1.7~:~1.6     & 1.8~:~1.6      & 1.8~:~1.7    \\ \hline
aloi        & 13~:~7.8	    & 12~:~8.0	   & 11~:~7.7    & 9.9~:~7.4     & 9.3~:~7.0      & 8.8~:~6.7    \\ \hline
\end{tabular}
\caption{The \textbf{theoretical} ~:~\textbf{empirical} ratios $\theta^{(\tau\text{-imp})}/\theta^{(\tau\text{-nice})}$.}
\label{tab:real_data}
\end{table}

\subsection{Conclusion}

In all experiments, $\tau$-importance sampling performs significantly better than $\tau$-nice sampling. The theoretical speedup factor computed by $\theta^{(\tau\text{-imp})}/\theta^{(\tau\text{-nice})}$ provides an excellent estimate of the actual speedup. We can observe that on denser data the speedup is higher than on sparse data. This matches the theoretical intuition for $v_i$ for both samplings. As we observed for artificial data,  for extreme datasets  the speedup can be arbitrary large, even several orders of magnitude. {\em A rule of thumb: if one has data with large $\sigma$, practical speedup from using importance minibatch sampling   will likely be dramatic.}

\begin{figure}[!h]
\subfigure[ijcnn1, $\tau$-nice (left), $\tau$-importance (right)]{\includegraphics[width=0.25\columnwidth]{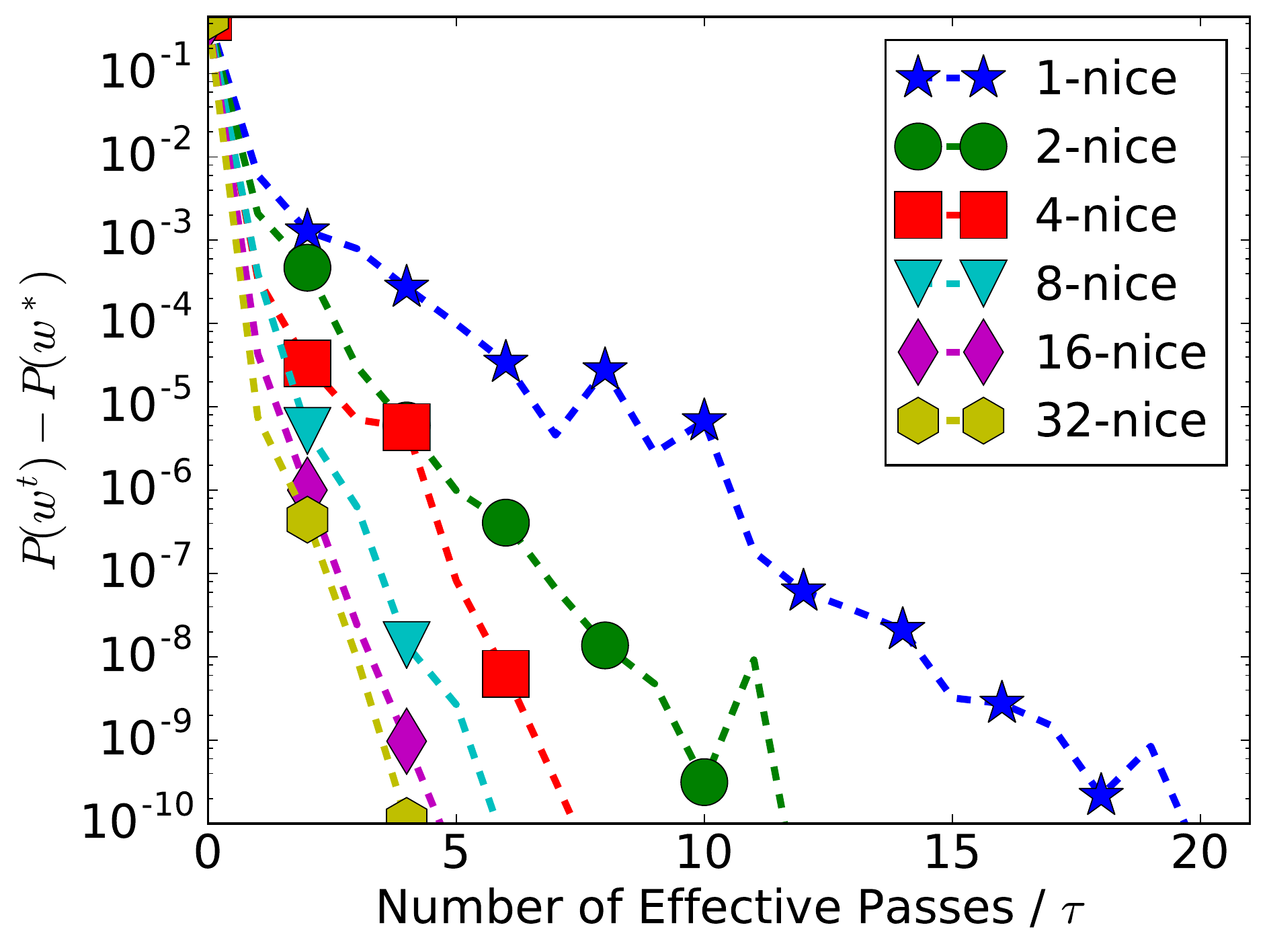}\includegraphics[width=0.25\columnwidth]{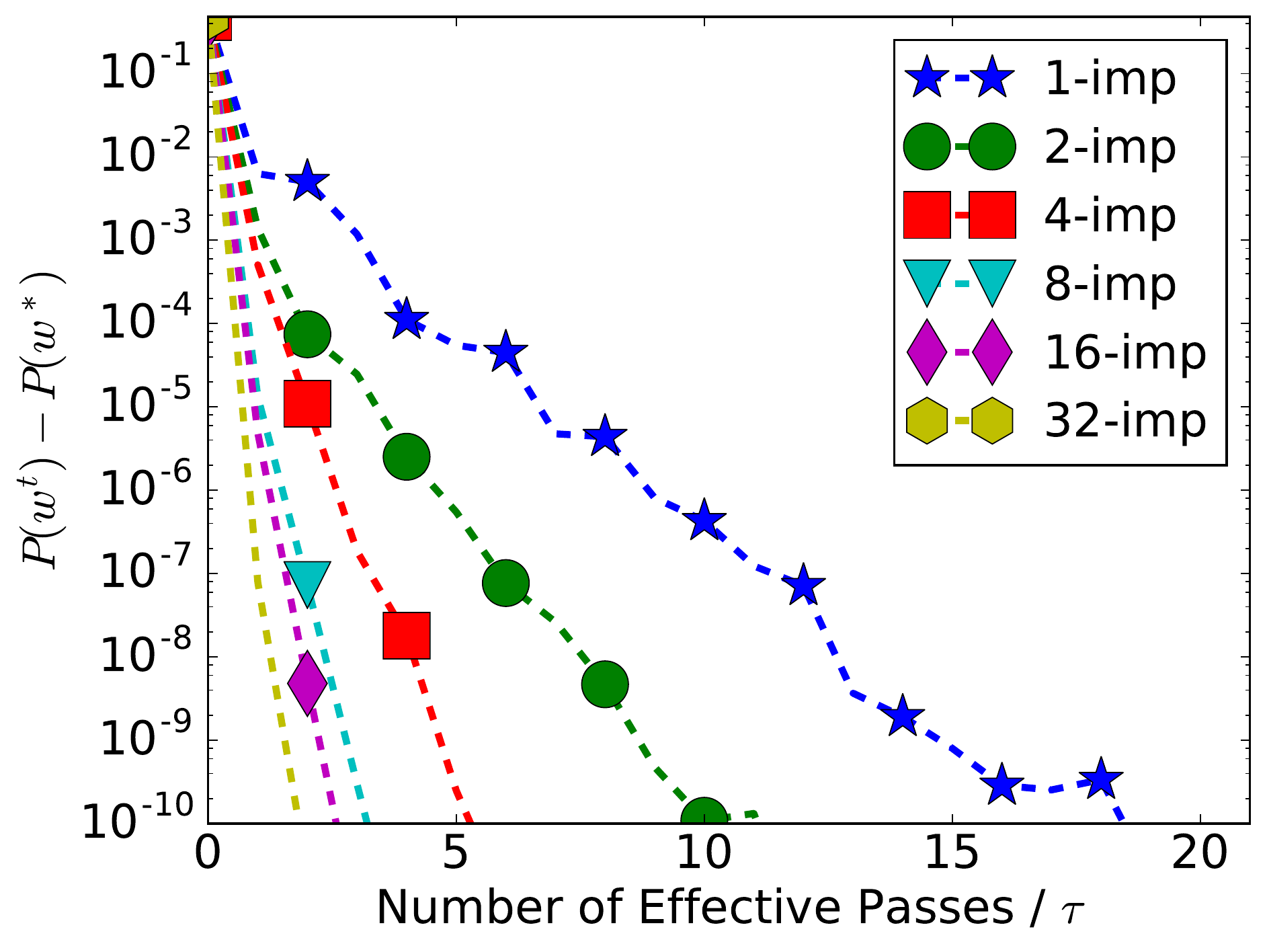}}
\hfill
\subfigure[protein, $\tau$-nice (left), $\tau$-importance (right)]{\includegraphics[width=0.25\columnwidth]{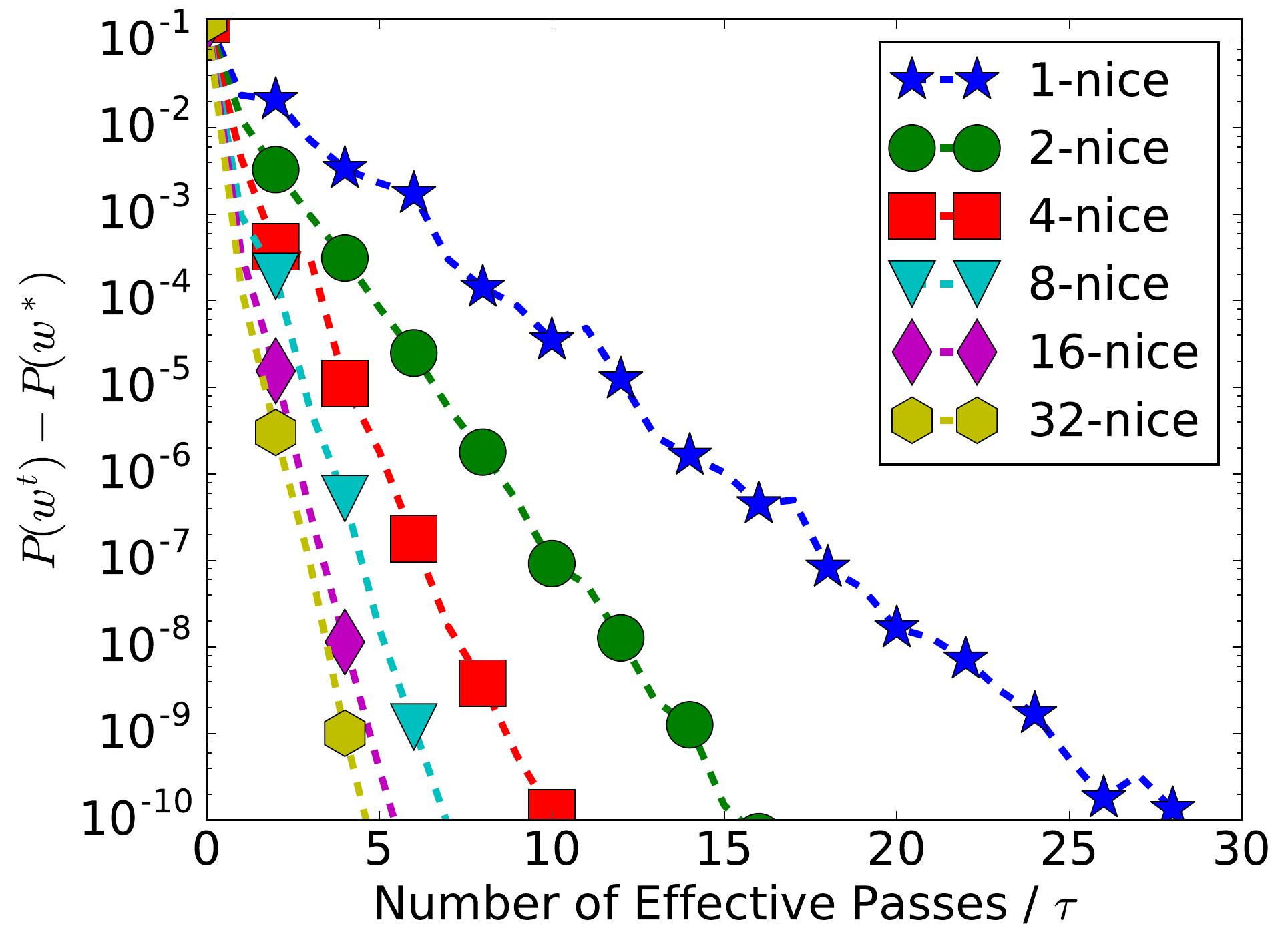}\includegraphics[width=0.25\columnwidth]{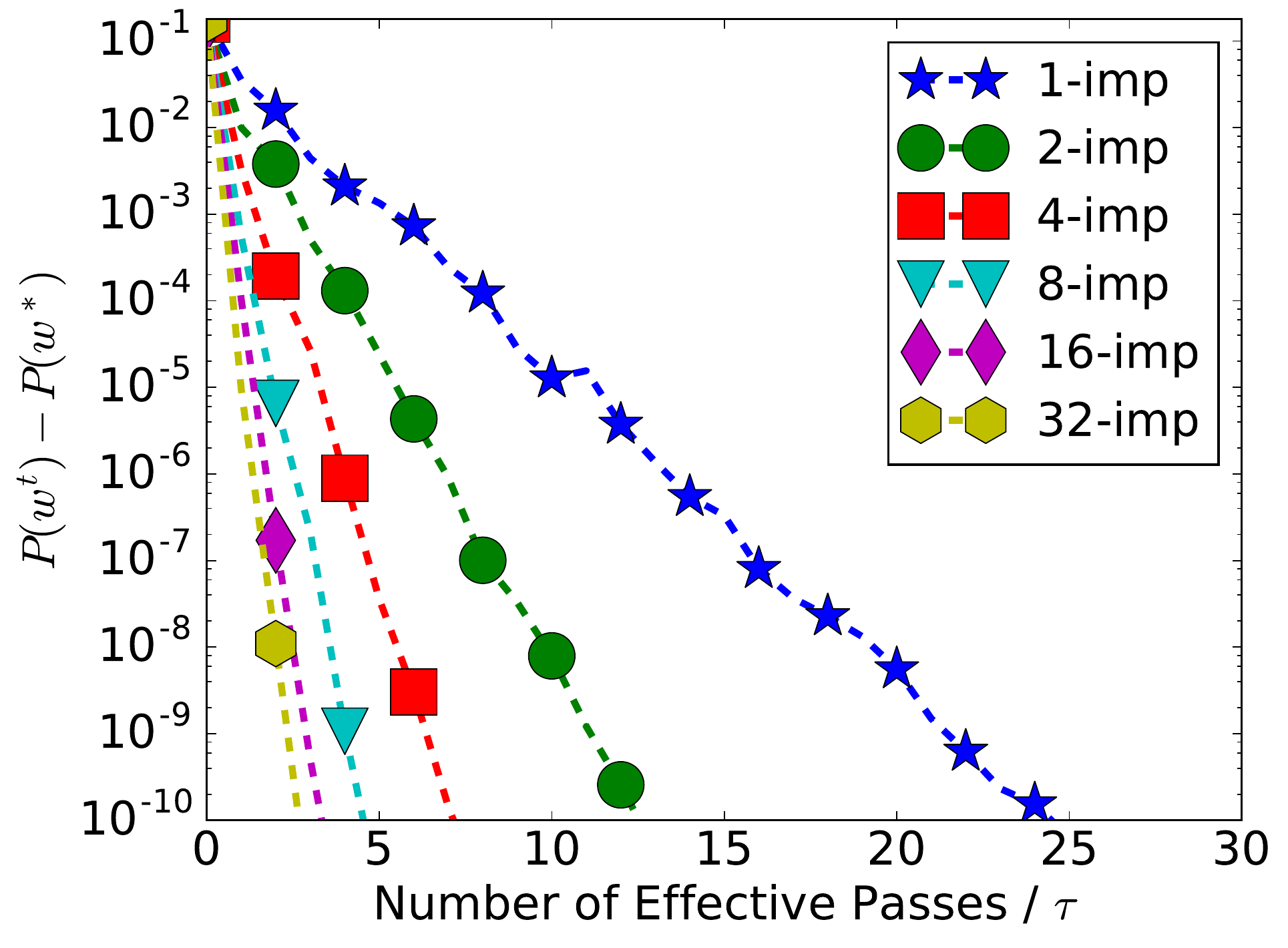}}
\hfill
\subfigure[w8a, $\tau$-nice (left), $\tau$-importance (right)]{\includegraphics[width=0.25\columnwidth]{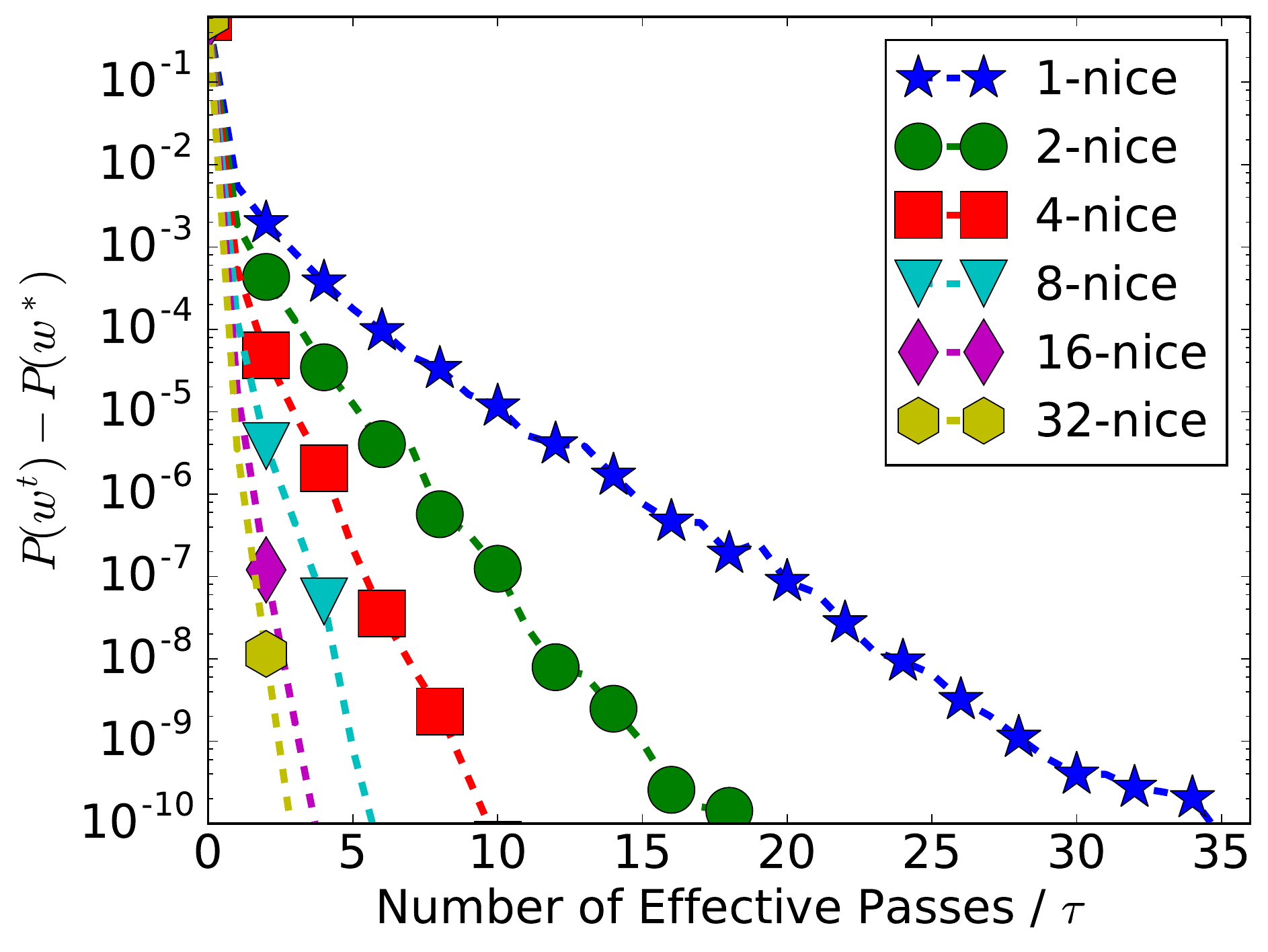}\includegraphics[width=0.25\columnwidth]{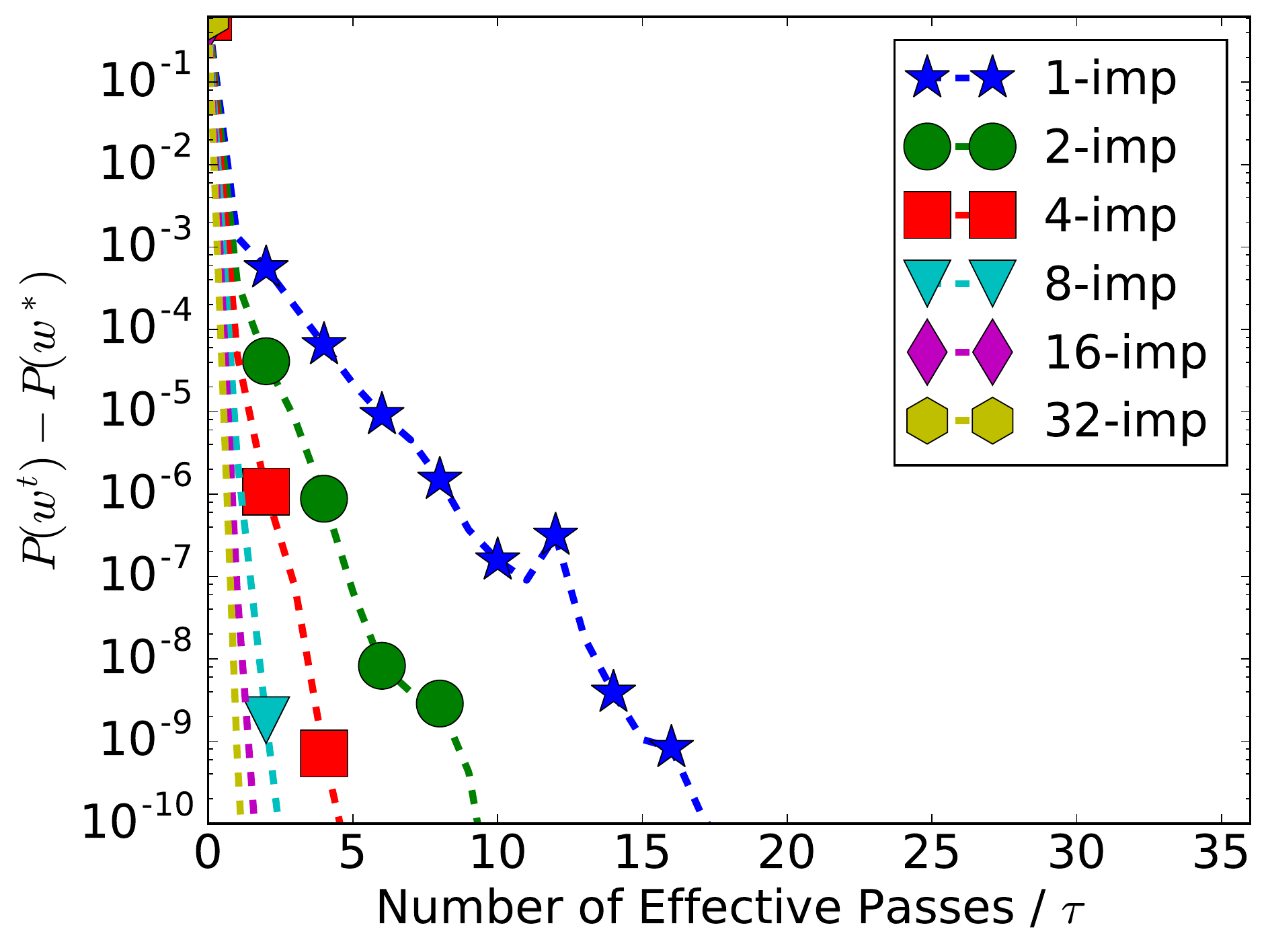}}
\hfill
\subfigure[aloi, $\tau$-nice (left), $\tau$-importance (right)]{\includegraphics[width=0.25\columnwidth]{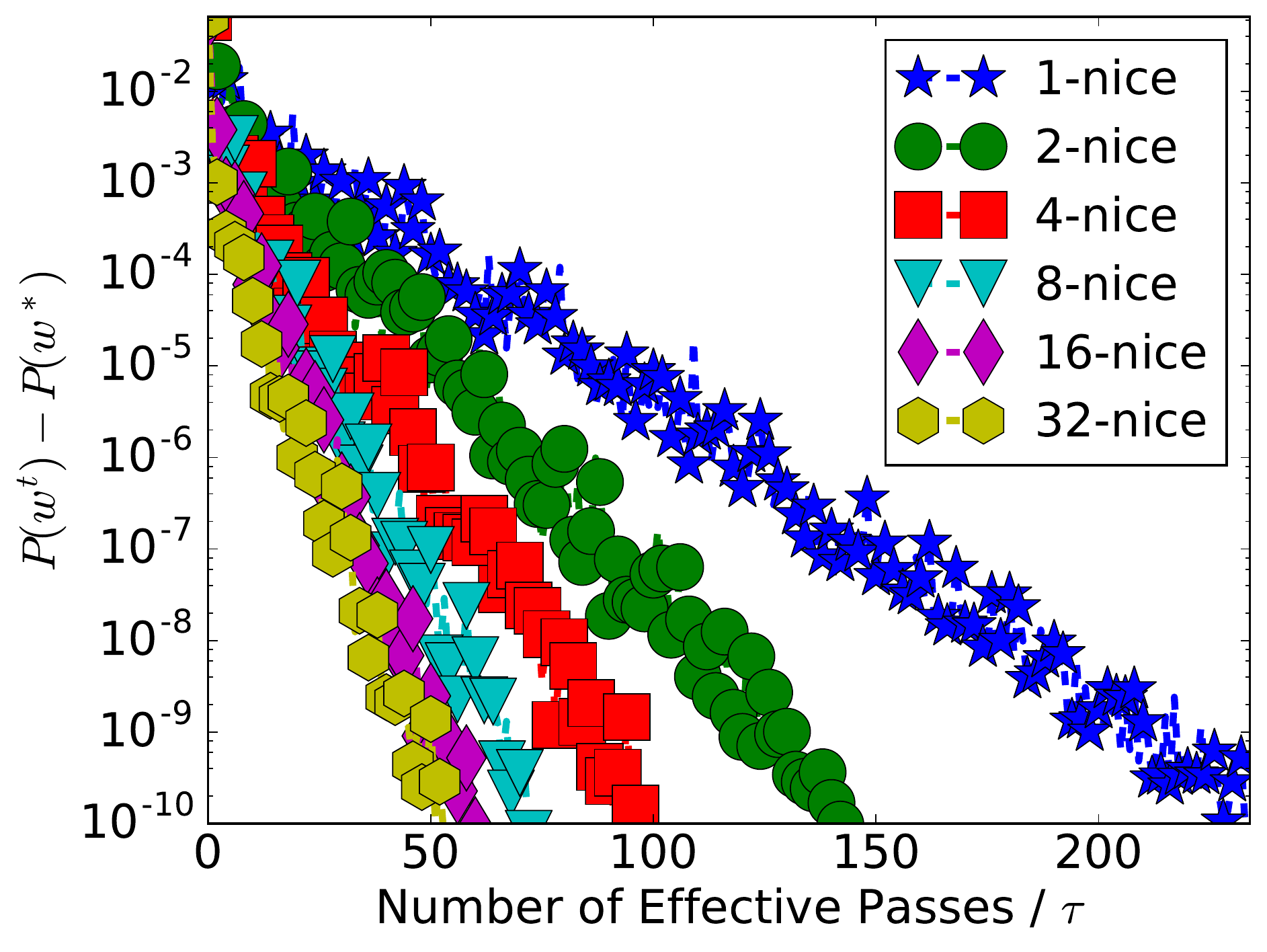}\includegraphics[width=0.25\columnwidth]{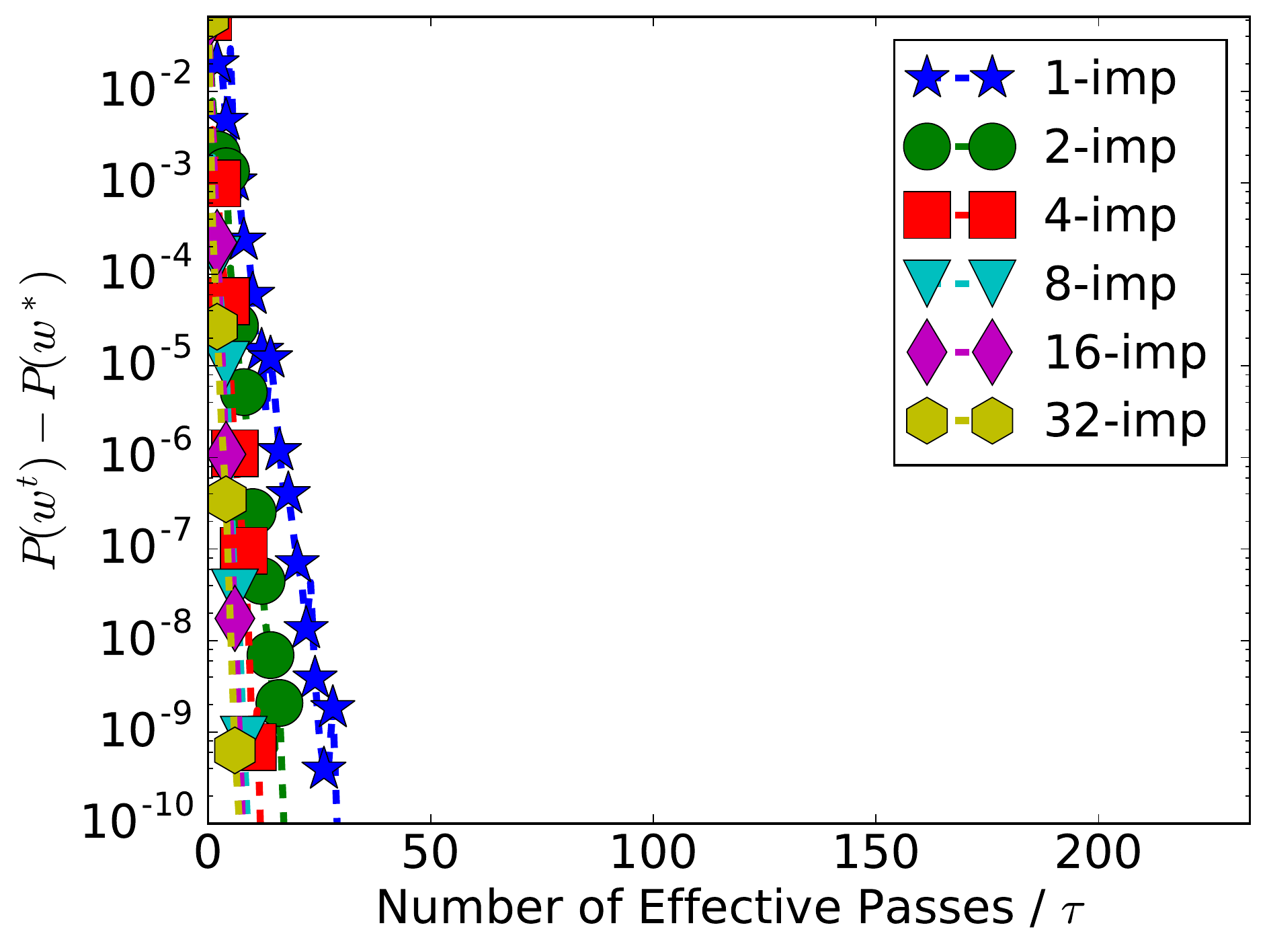}}
\center{
\subfigure[url, $\tau$-nice (left), $\tau$-importance (right)]{\includegraphics[width=0.25\columnwidth]{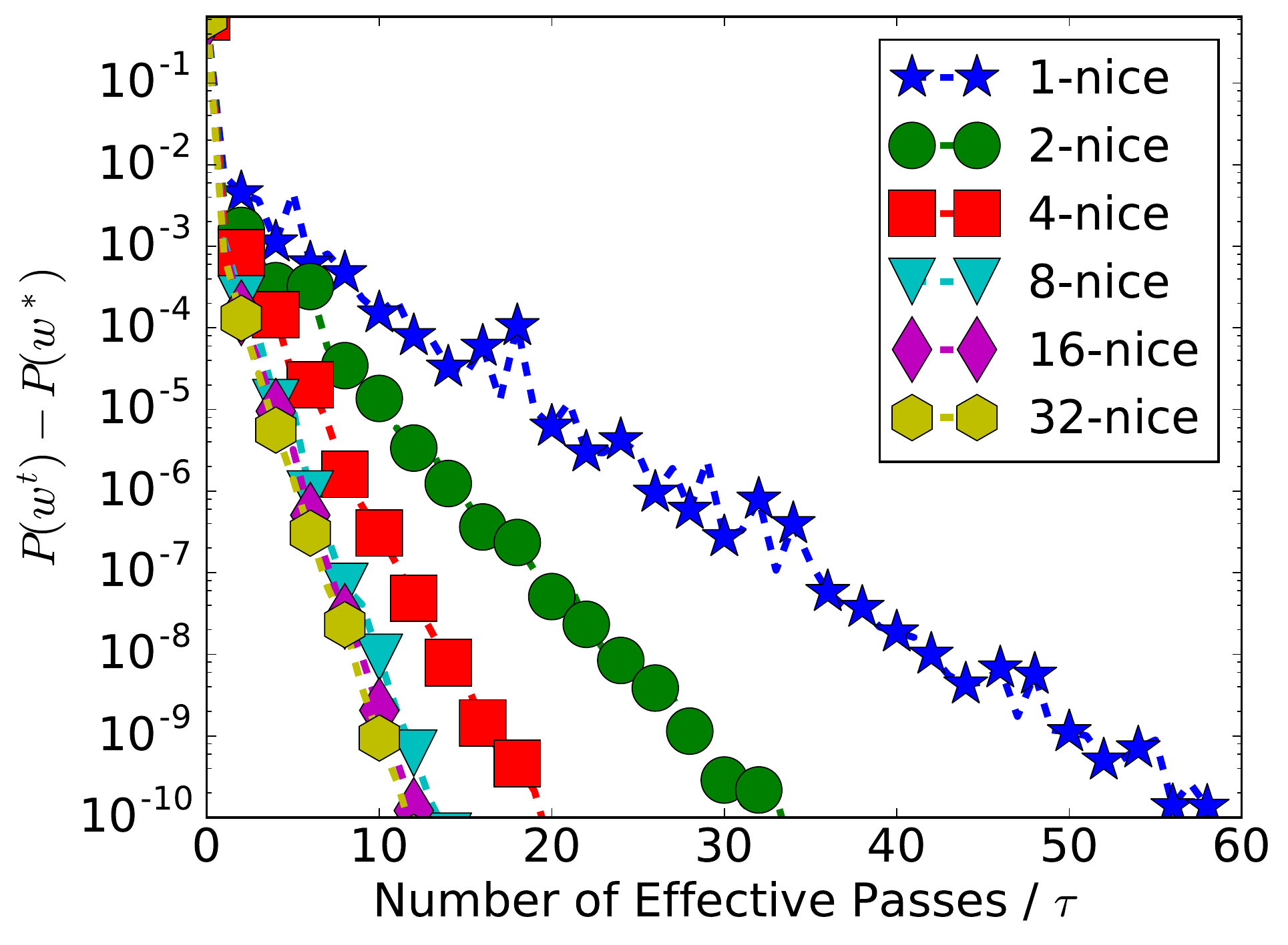}\includegraphics[width=0.25\columnwidth]{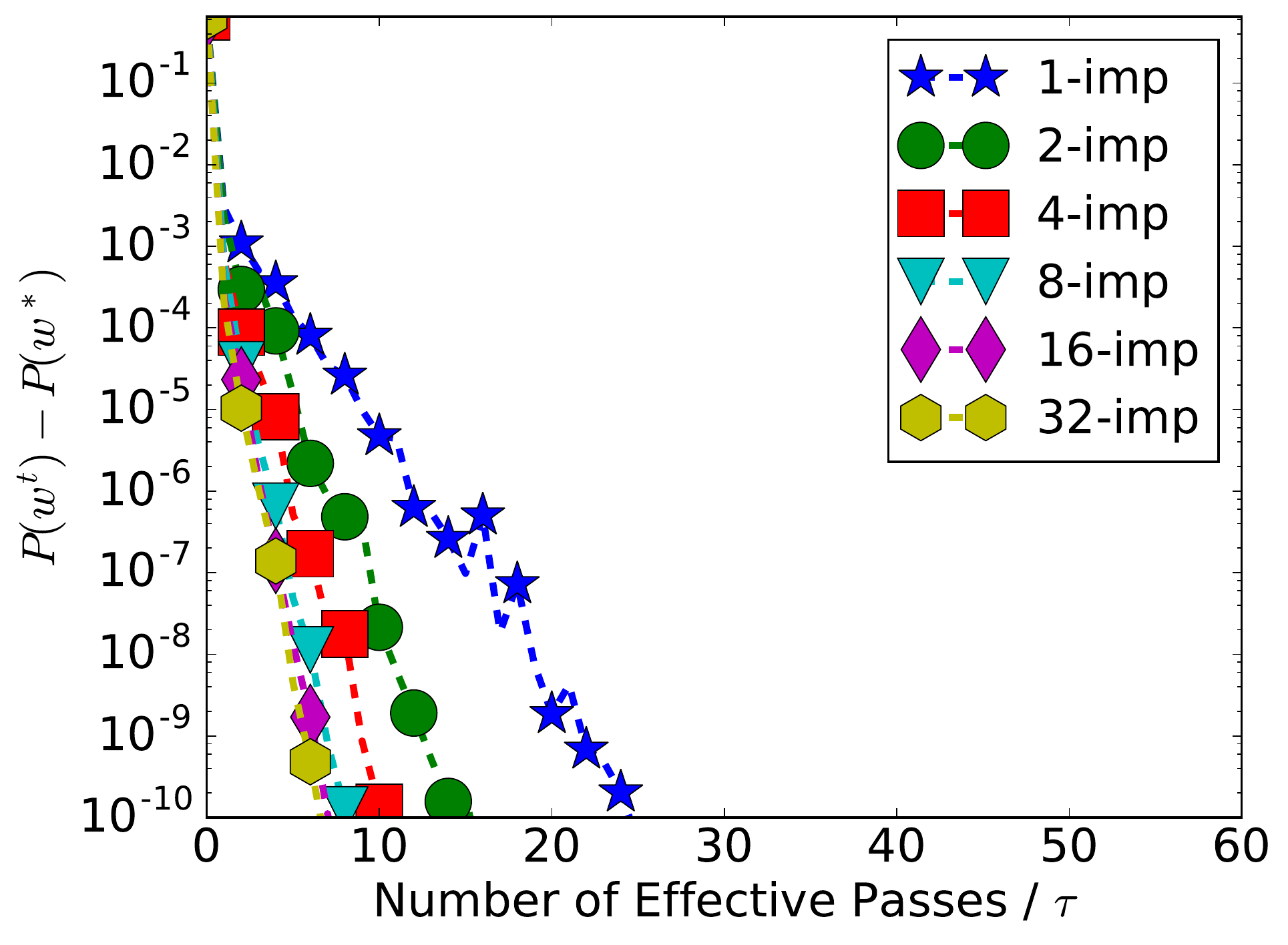}}
\caption{Real Datasets from Table \ref{tab:datasets}}}
\label{fig:real}
\end{figure}

\begin{figure}
\begin{minipage}{0.5\textwidth}
\hfill
\subfigure[uniform, $\tau$-nice (left), $\tau$-importance (right)]{\includegraphics[width=0.5\columnwidth]{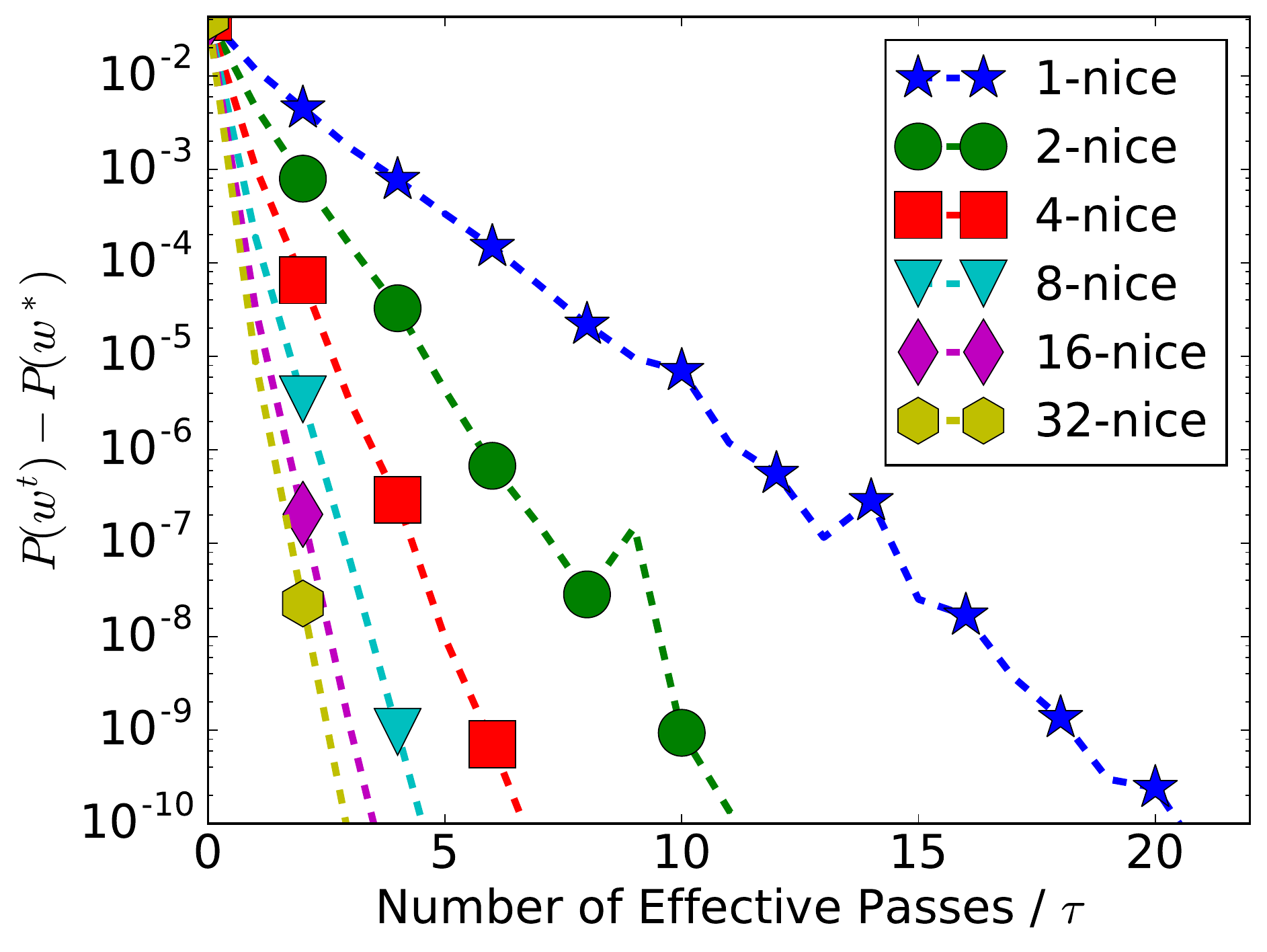}\includegraphics[width=0.5\columnwidth]{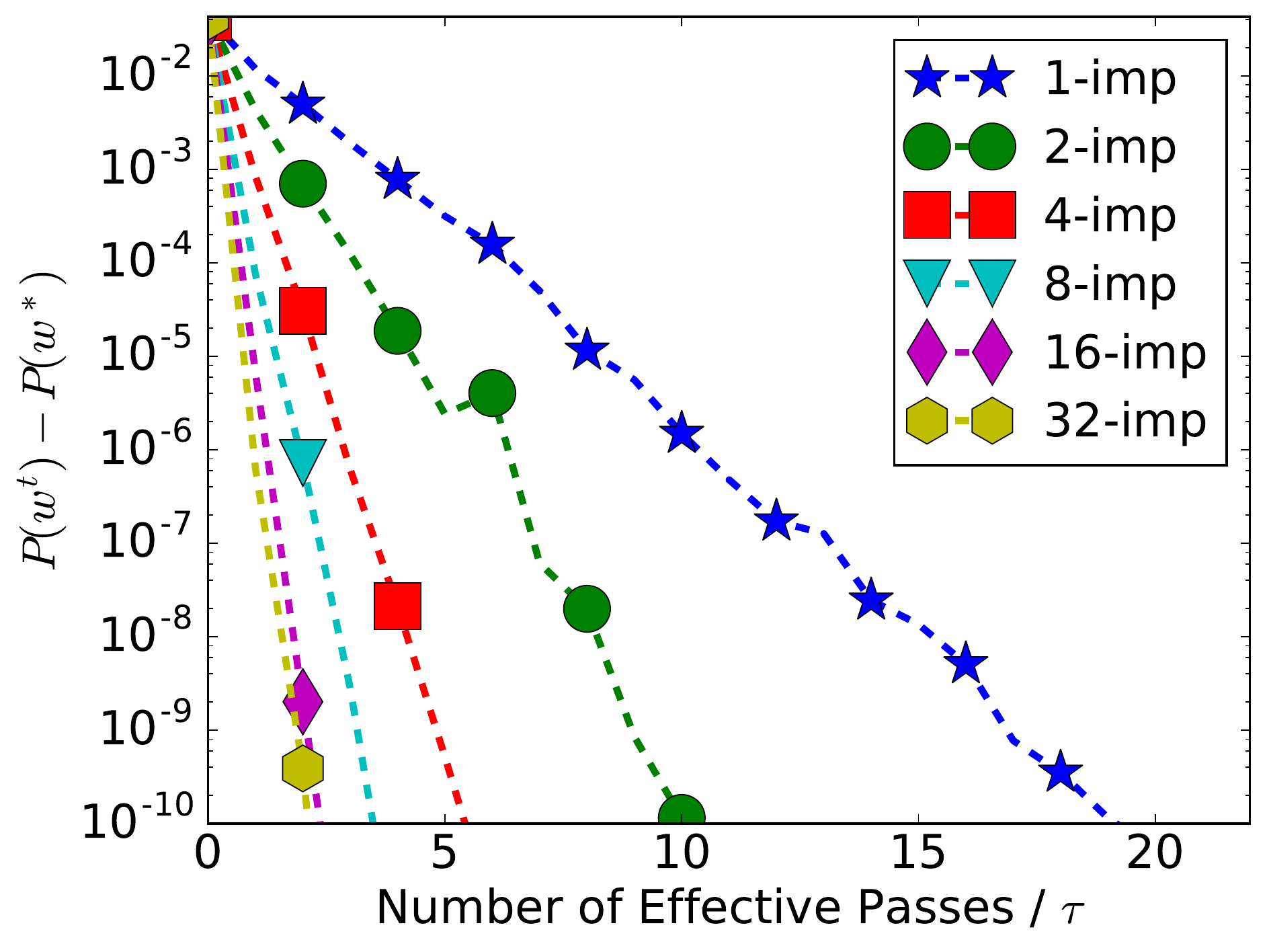}}
\hfill
\subfigure[chisq100, $\tau$-nice (left), $\tau$-importance (right)]{\includegraphics[width=0.5\columnwidth]{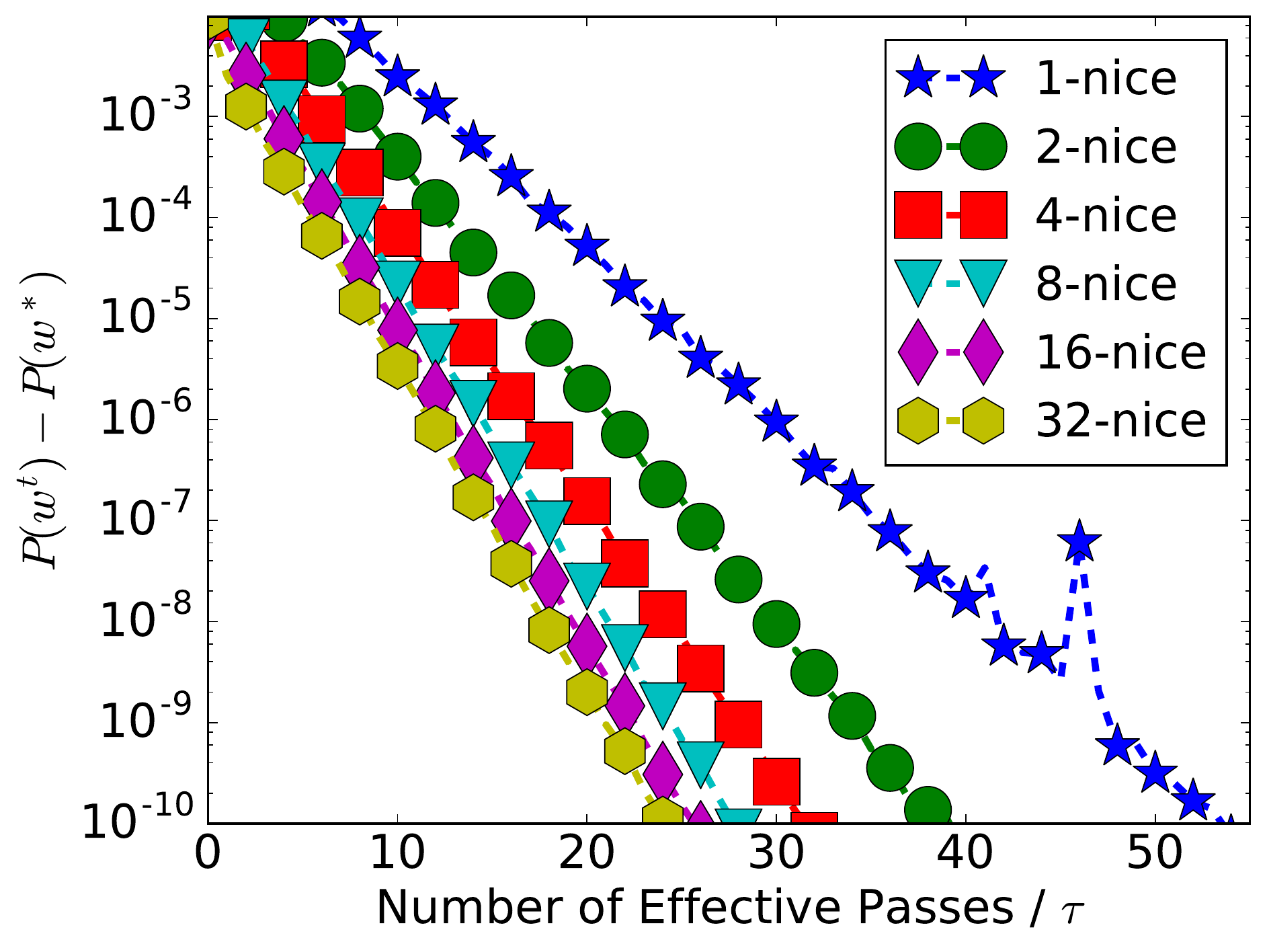}\includegraphics[width=0.5\columnwidth]{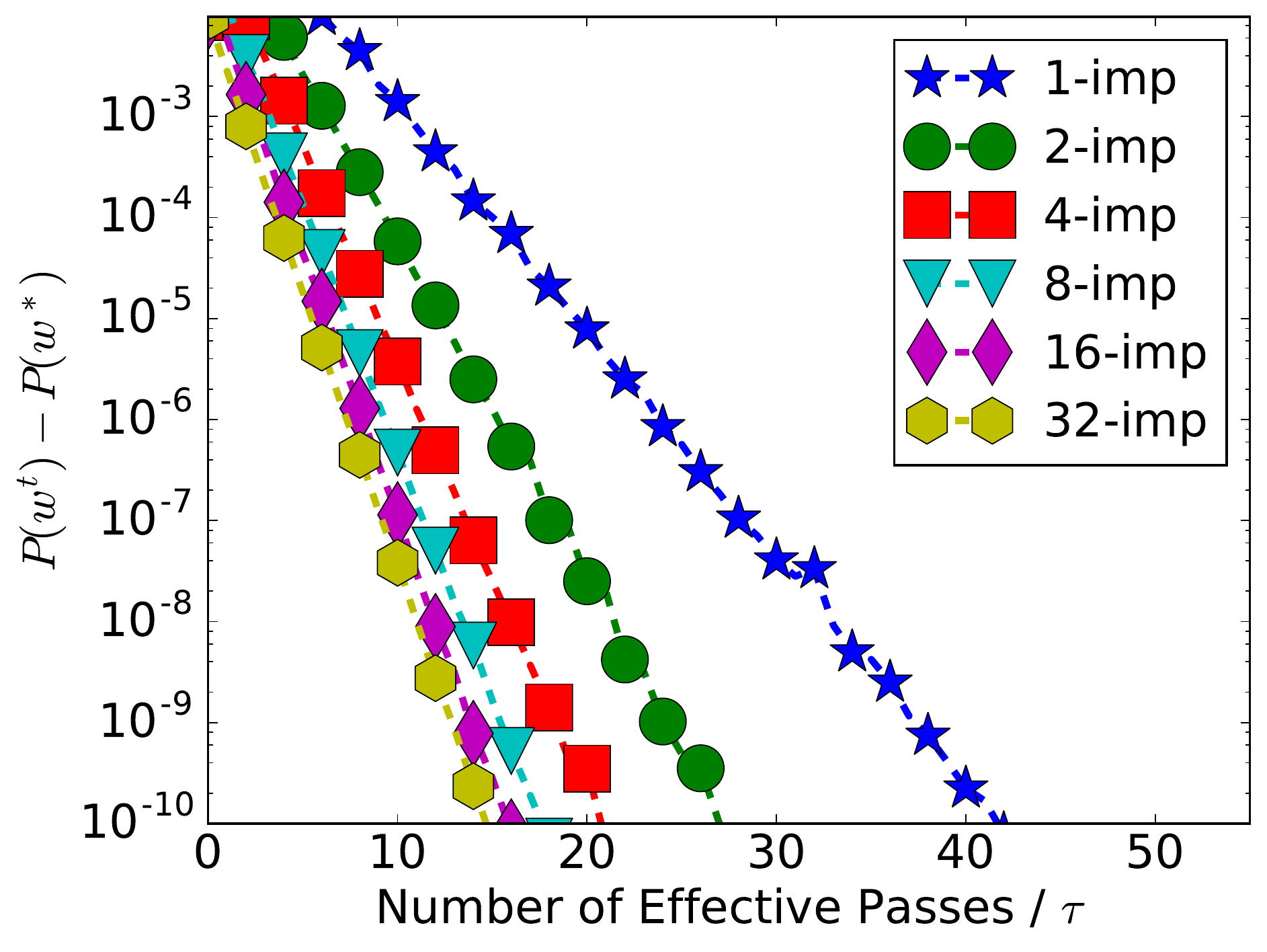}}
\hfill
\subfigure[chisq10, $\tau$-nice (left), $\tau$-importance (right)]{\includegraphics[width=0.5\columnwidth]{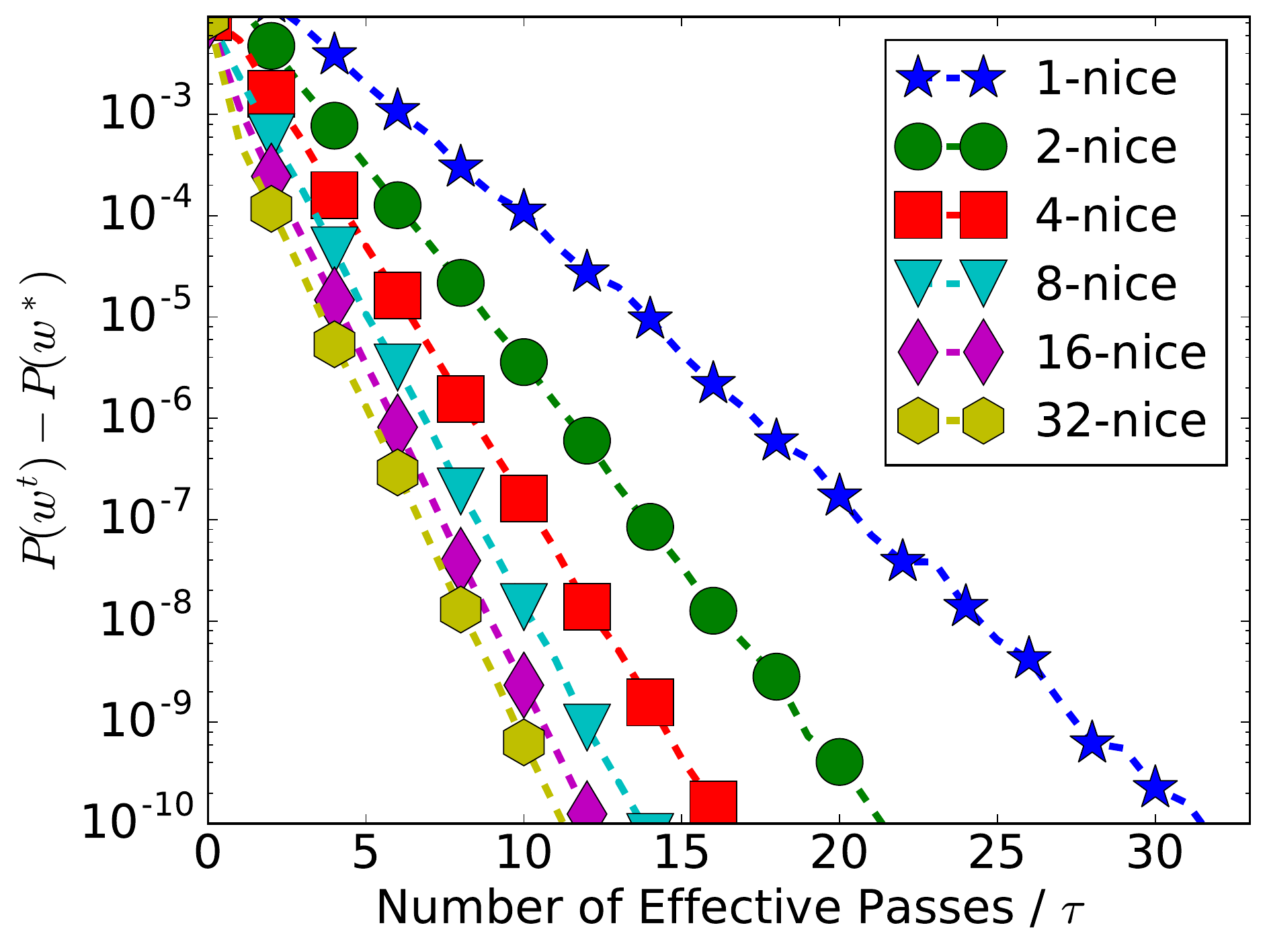}\includegraphics[width=0.5\columnwidth]{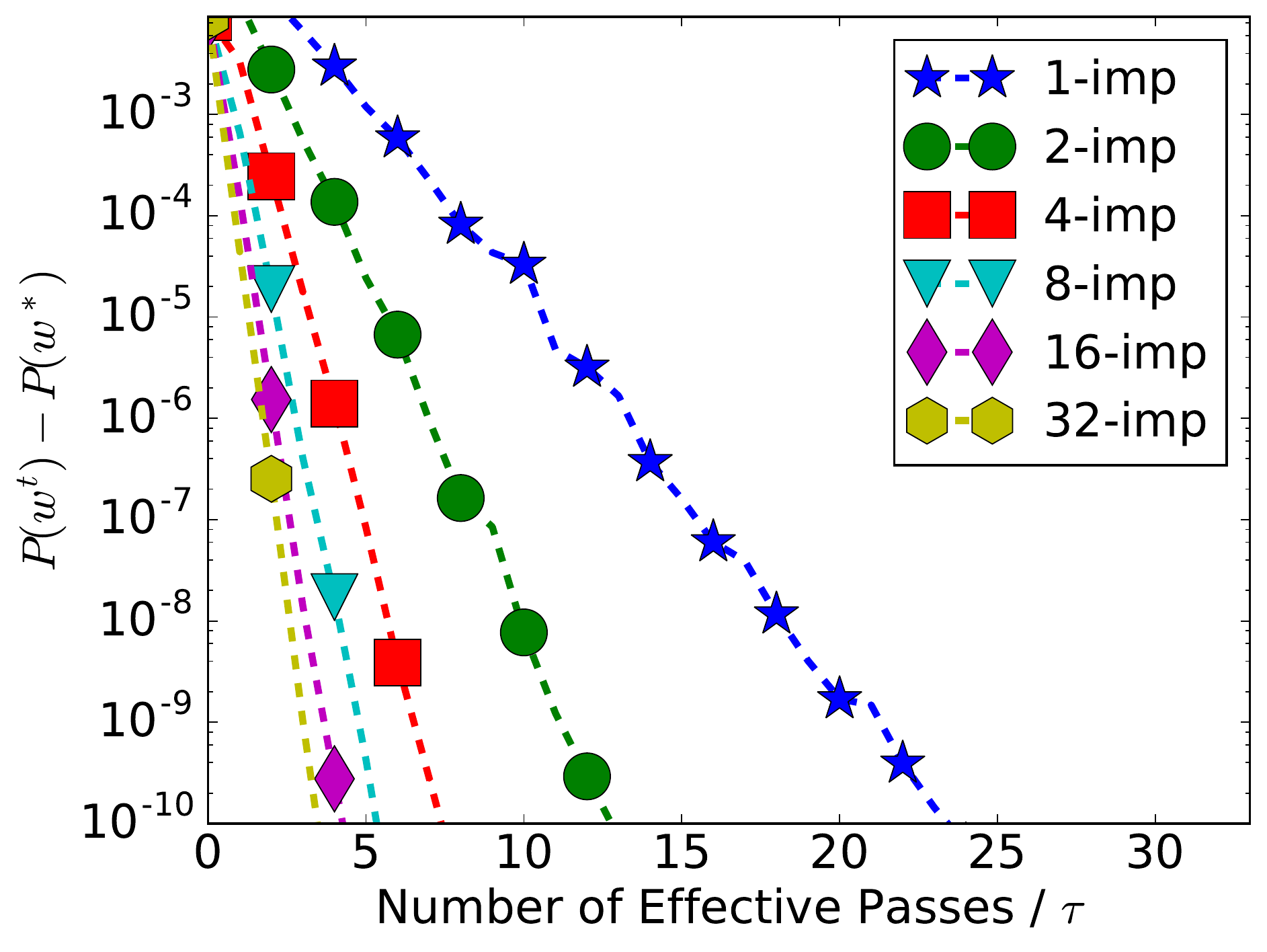}}
\hfill
\subfigure[chisq1, $\tau$-nice (left), $\tau$-importance (right)]{\includegraphics[width=0.5\columnwidth]{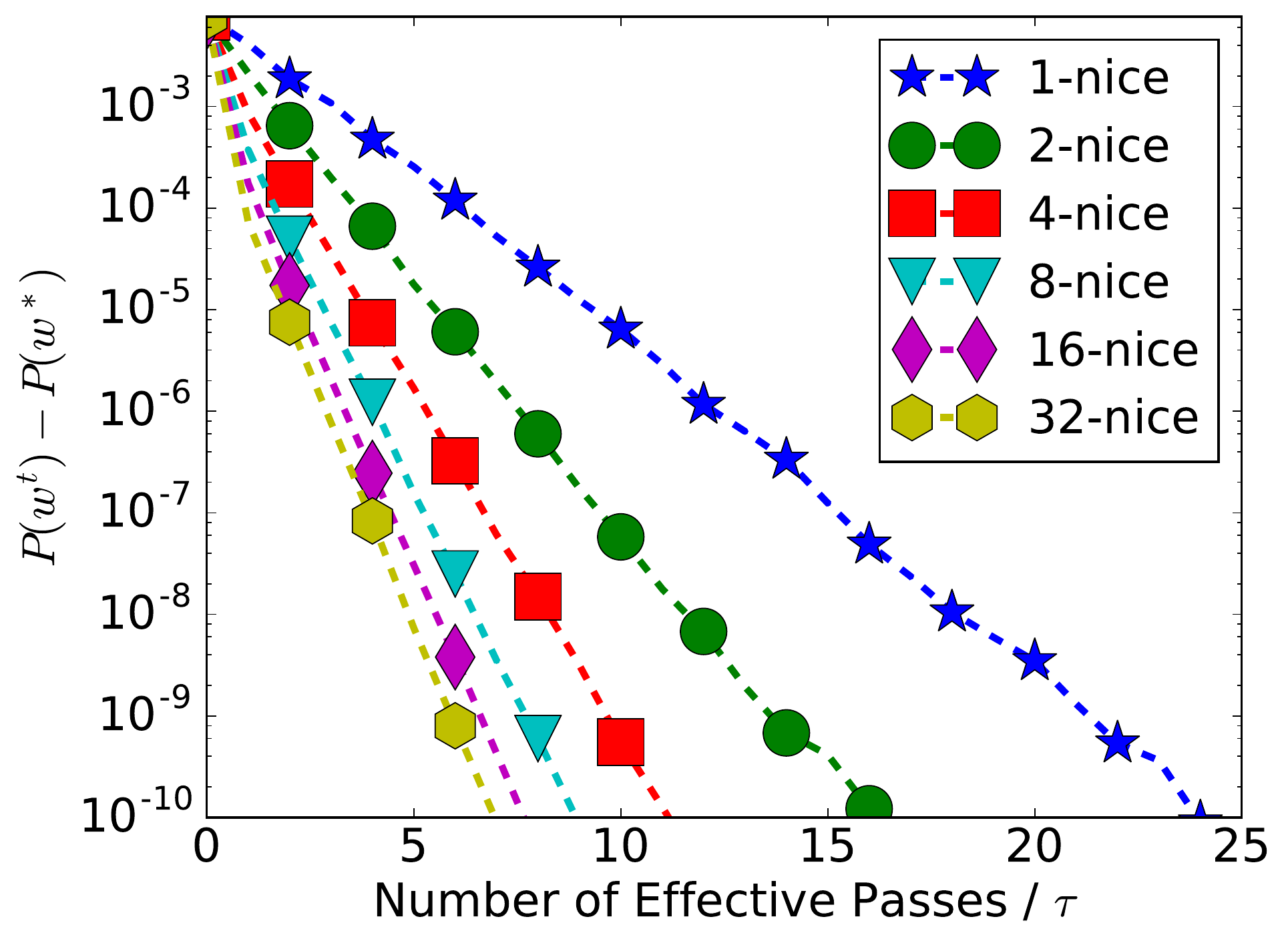}\includegraphics[width=0.5\columnwidth]{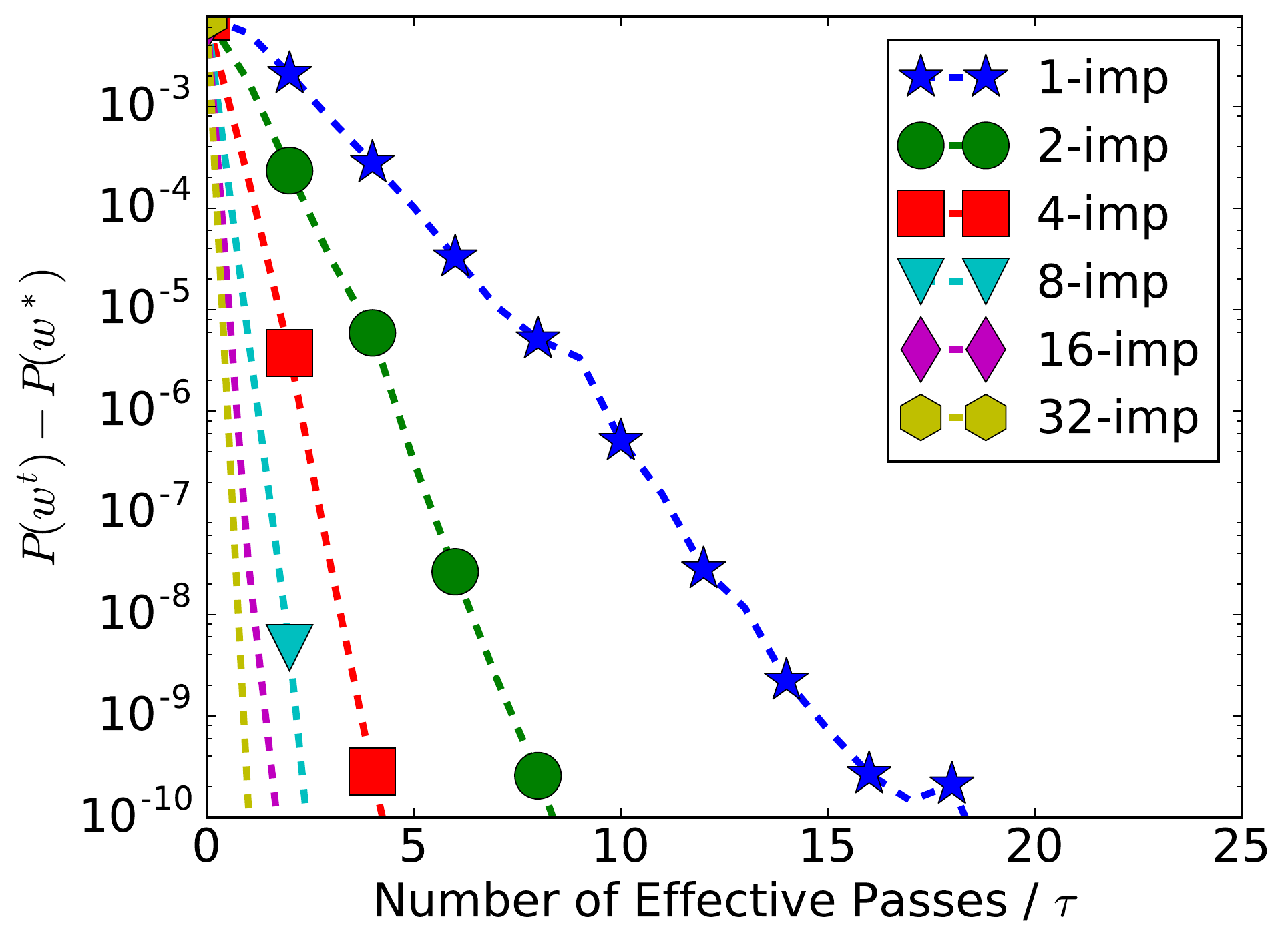}}
\hfill
\subfigure[extreme, $\tau$-nice (left), $\tau$-importance (right)]{\includegraphics[width=0.5\columnwidth]{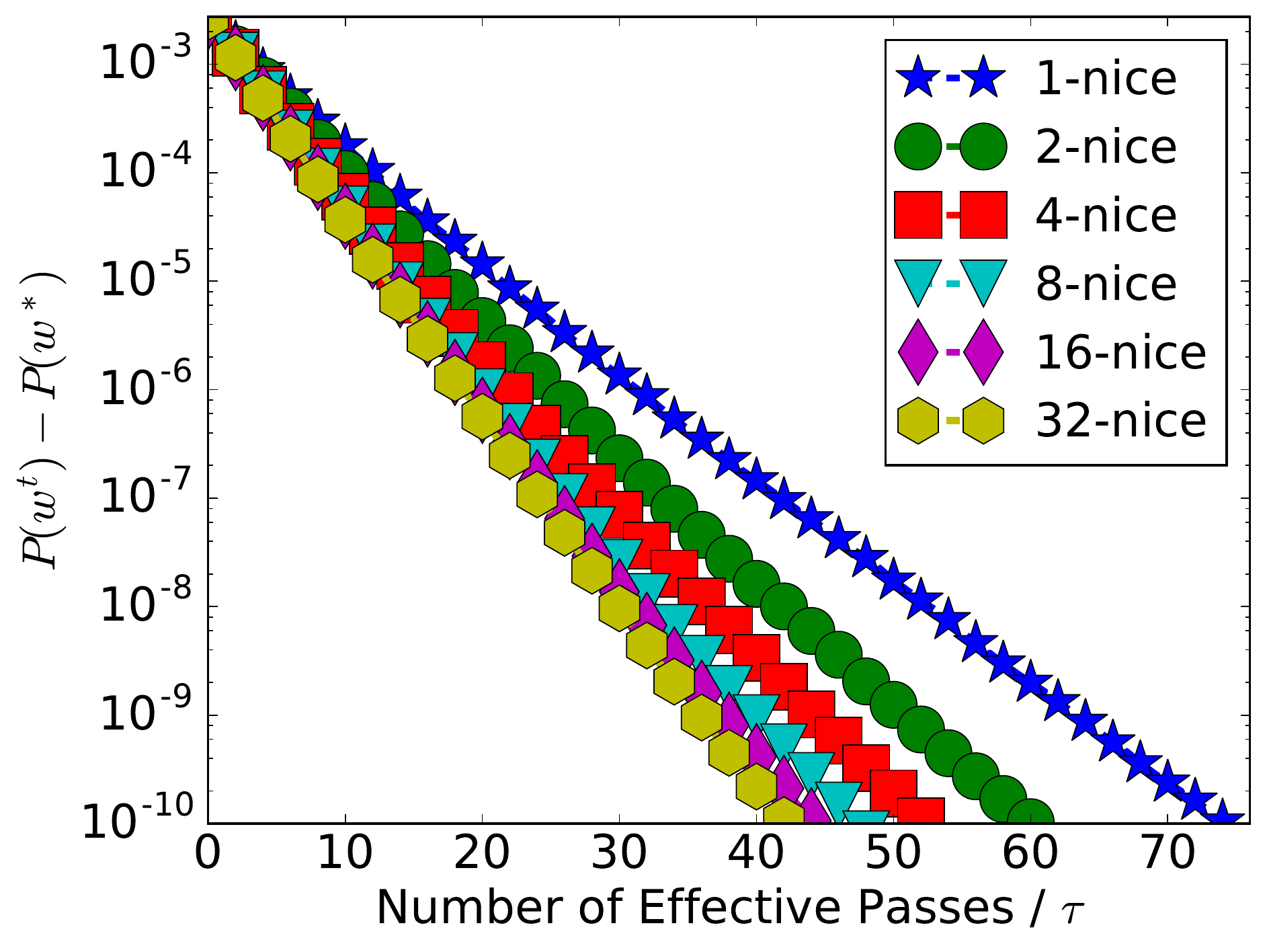}\includegraphics[width=0.5\columnwidth]{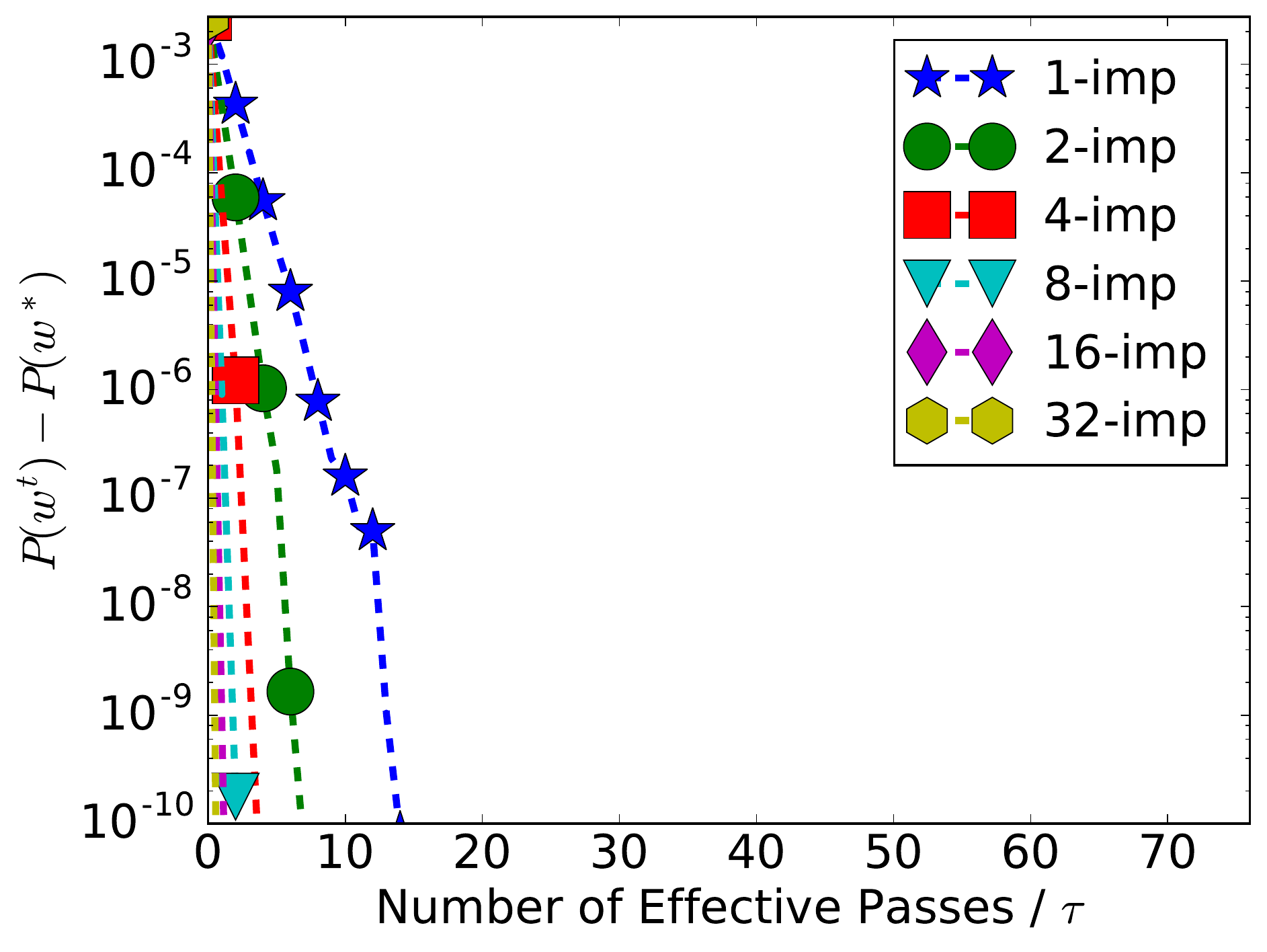}}
\caption{Artificial datasets from Table \ref{tab:dists} with $\omega = 0.8$}
\label{fig:artificial_0.8}
\end{minipage}~
\begin{minipage}{0.5\textwidth}
\hfill
\subfigure[uniform, $\tau$-nice (left), $\tau$-importance (right)]{\includegraphics[width=0.5\columnwidth]{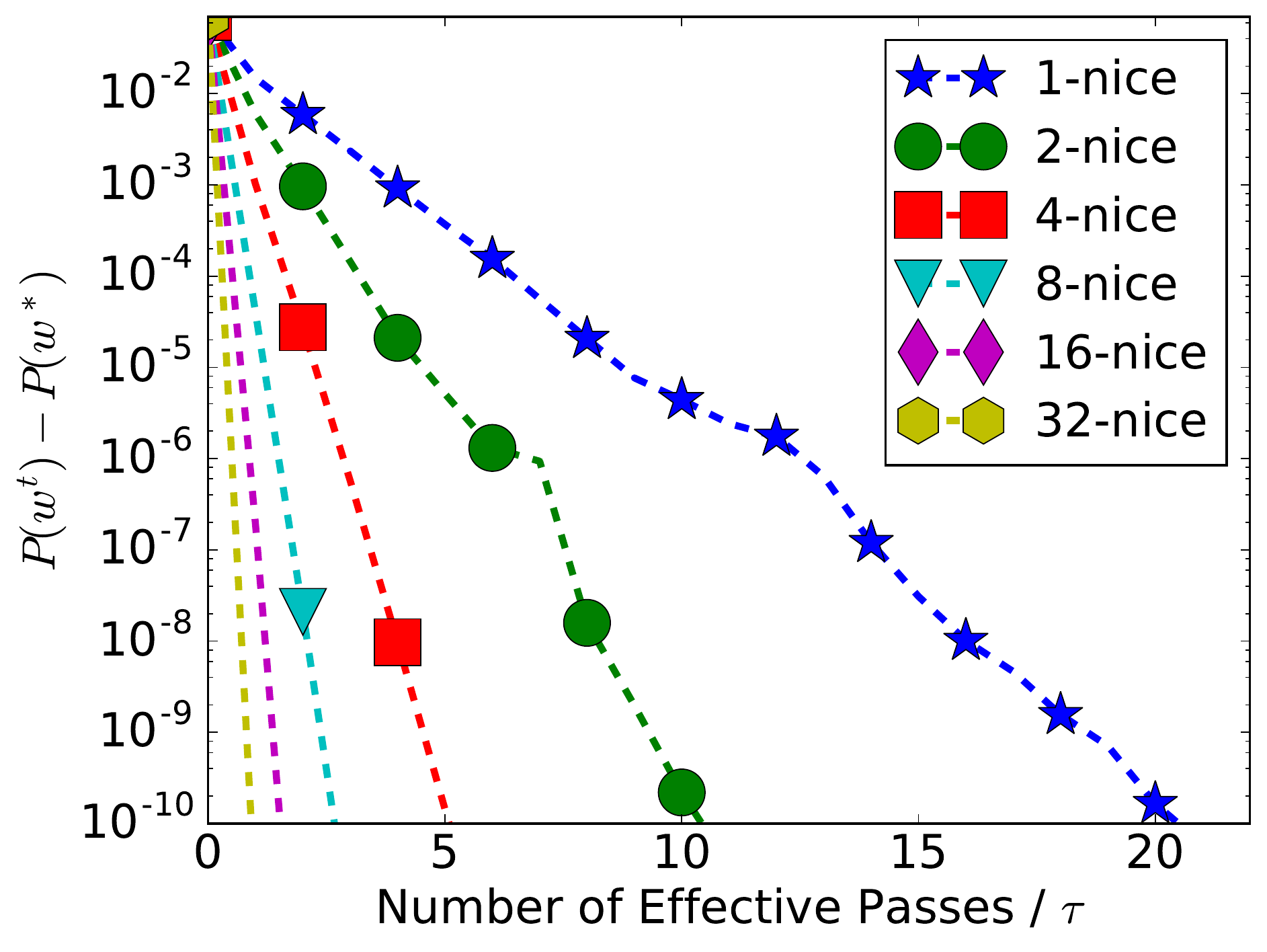}\includegraphics[width=0.5\columnwidth]{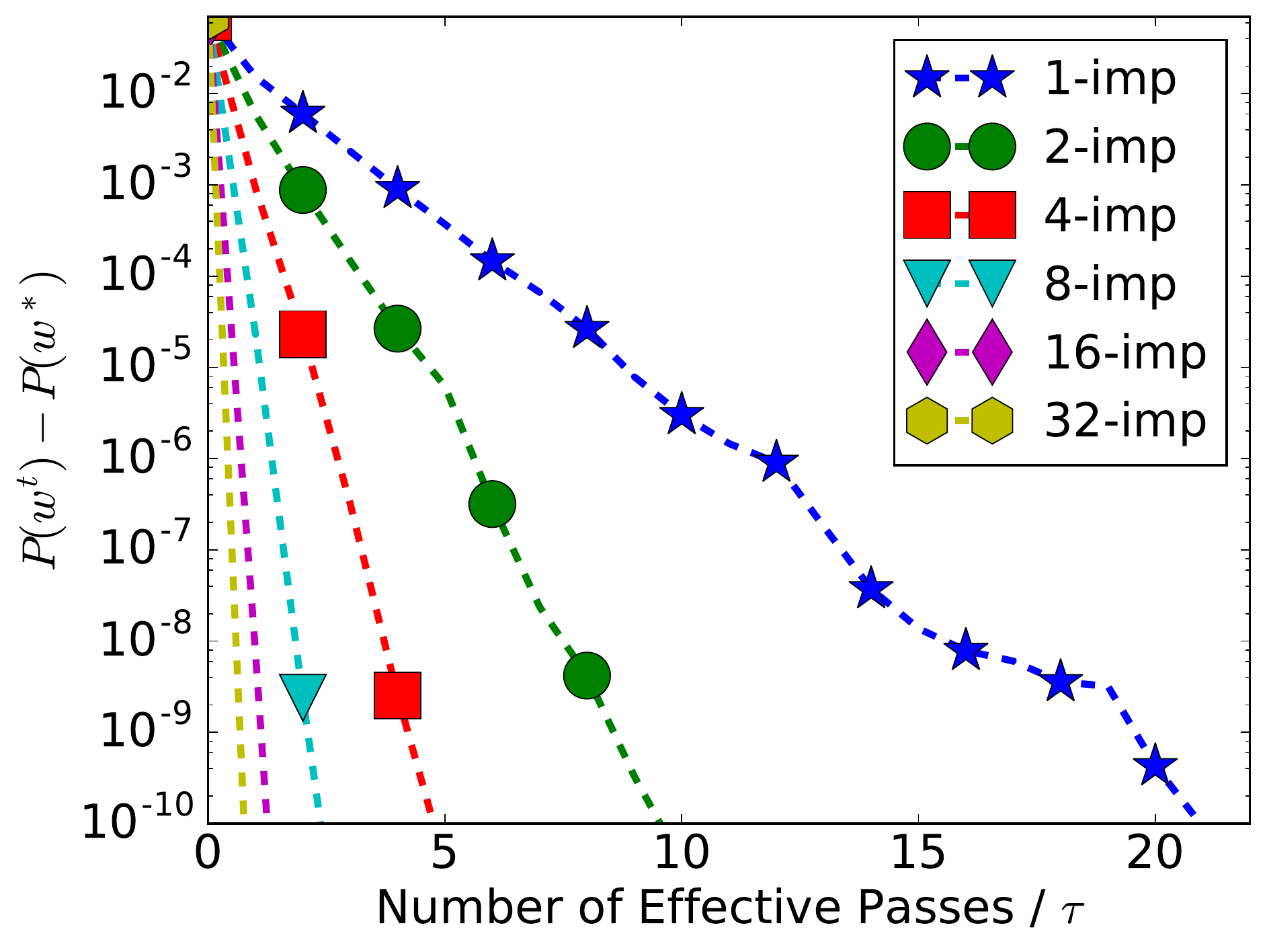}}
\hfill
\subfigure[chisq100, $\tau$-nice (left), $\tau$-importance (right)]{\includegraphics[width=0.5\columnwidth]{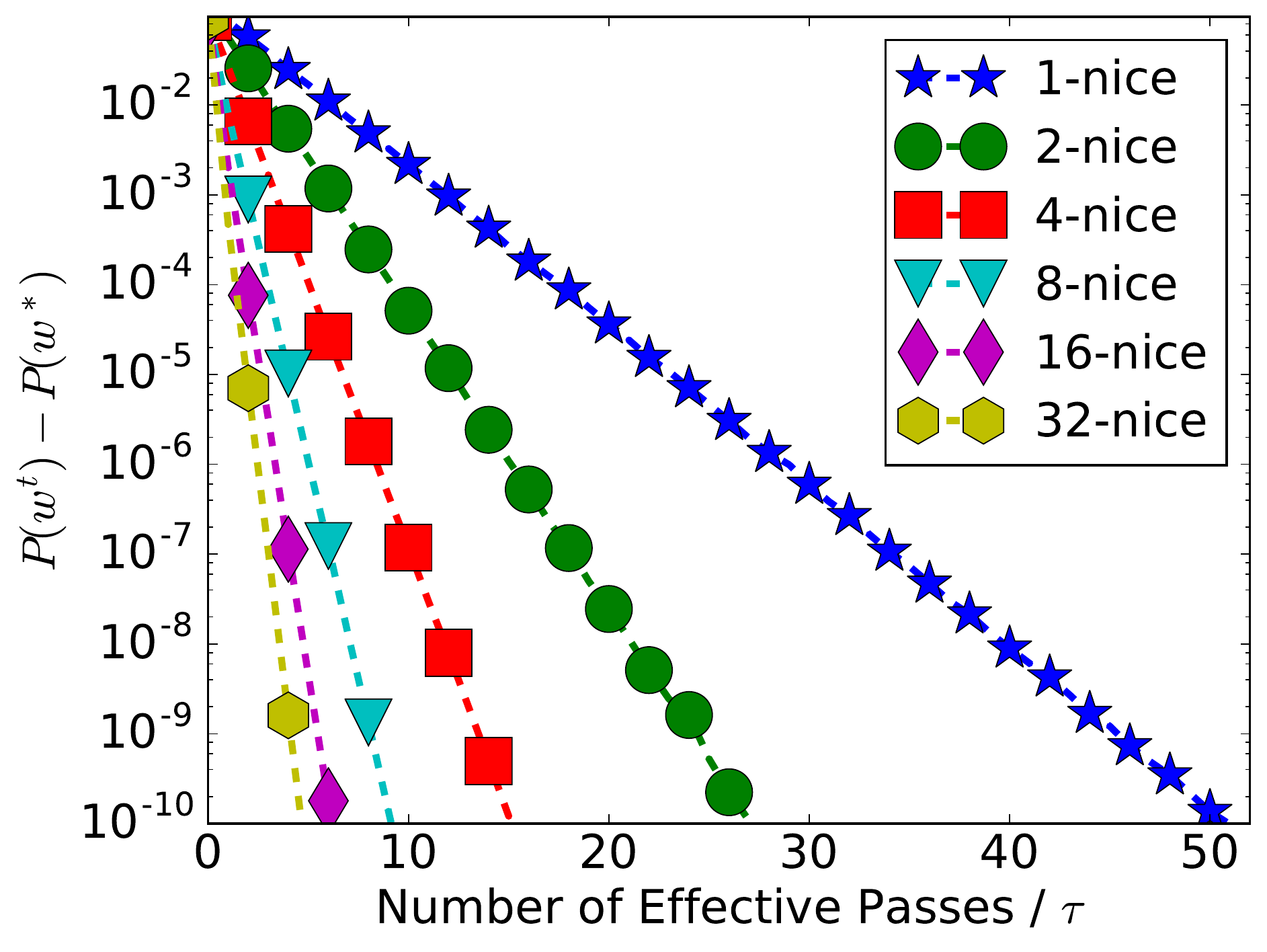}\includegraphics[width=0.5\columnwidth]{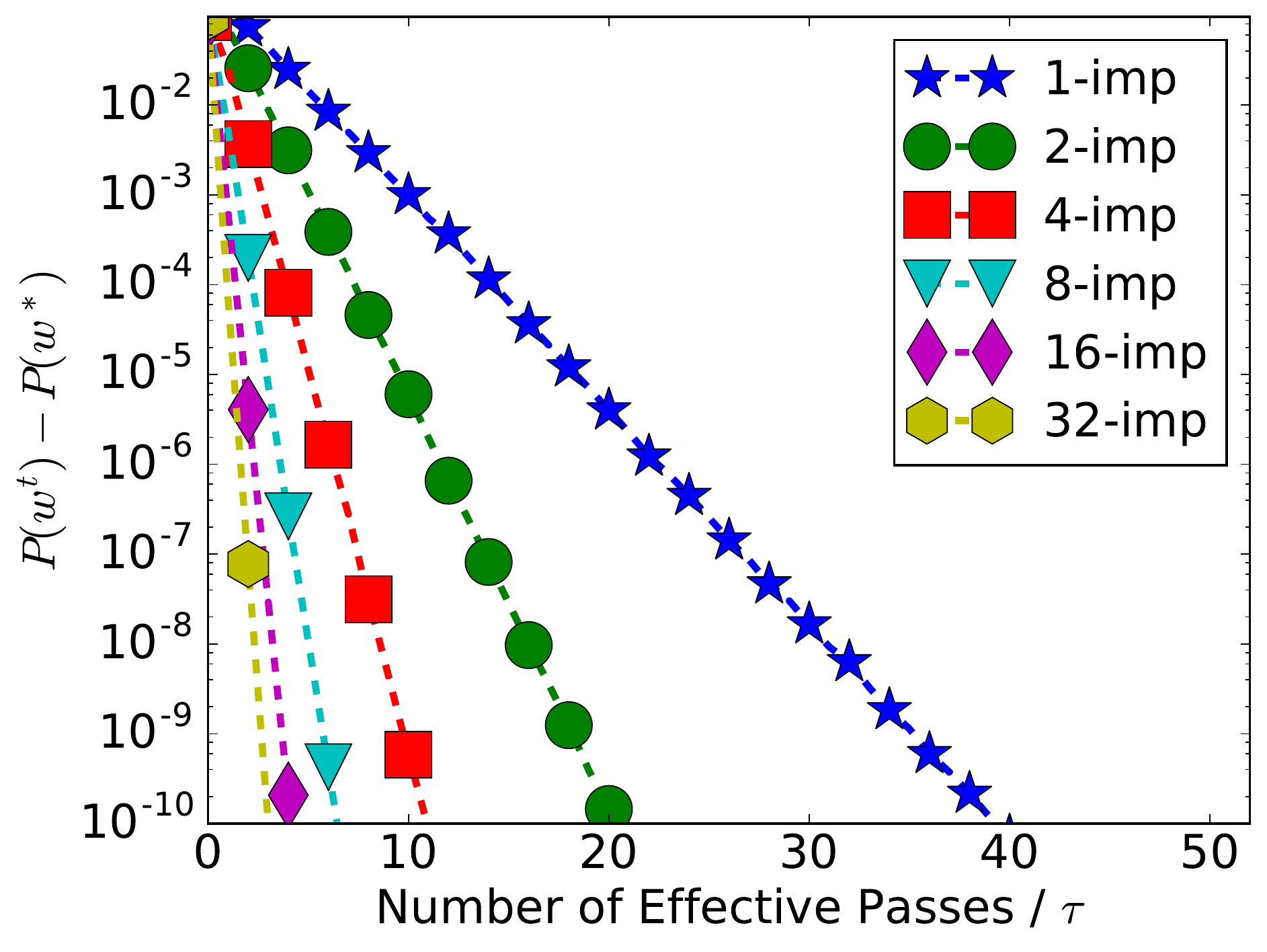}}
\hfill
\subfigure[chisq10, $\tau$-nice (left), $\tau$-importance (right)]{\includegraphics[width=0.5\columnwidth]{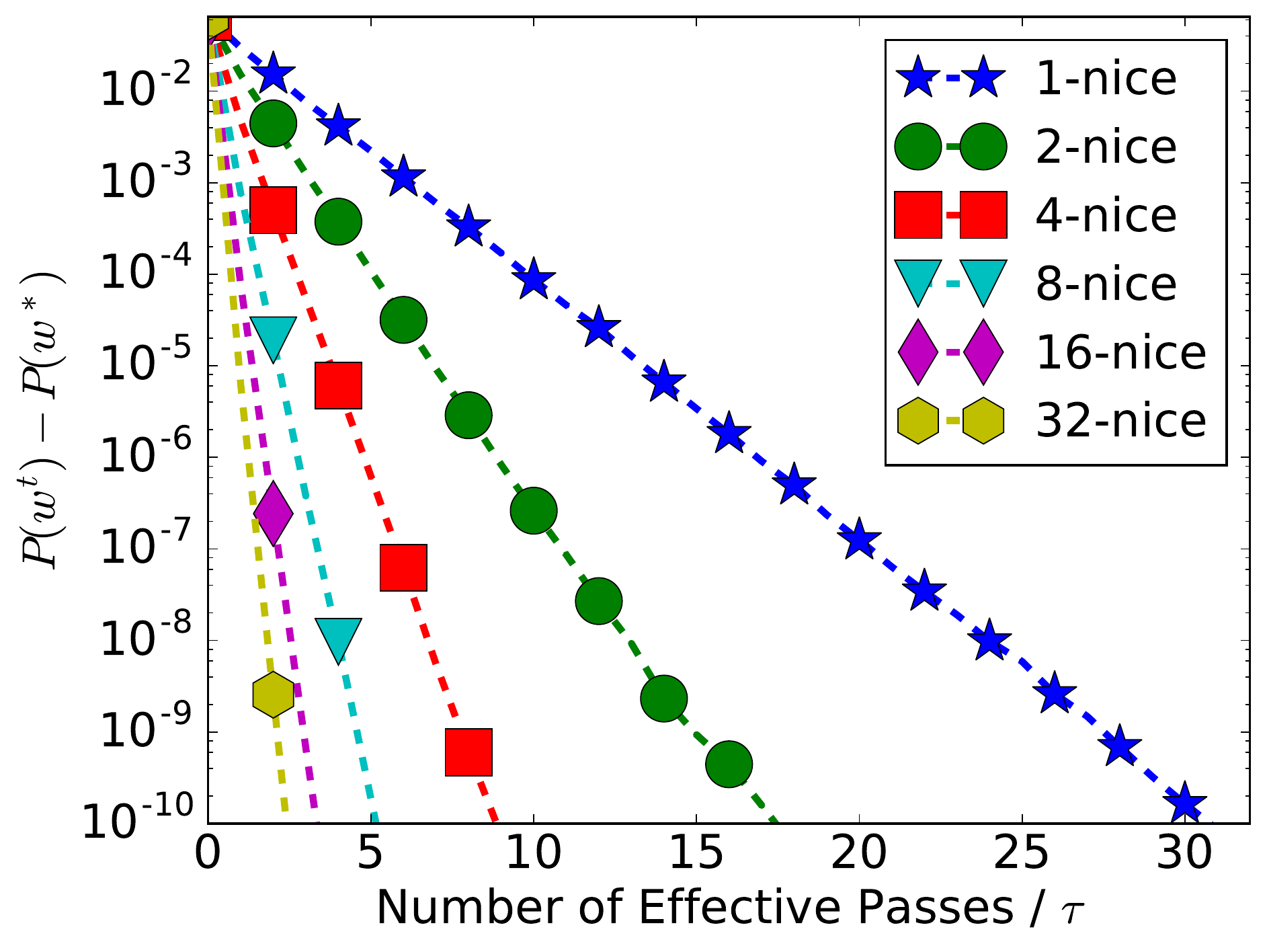}\includegraphics[width=0.5\columnwidth]{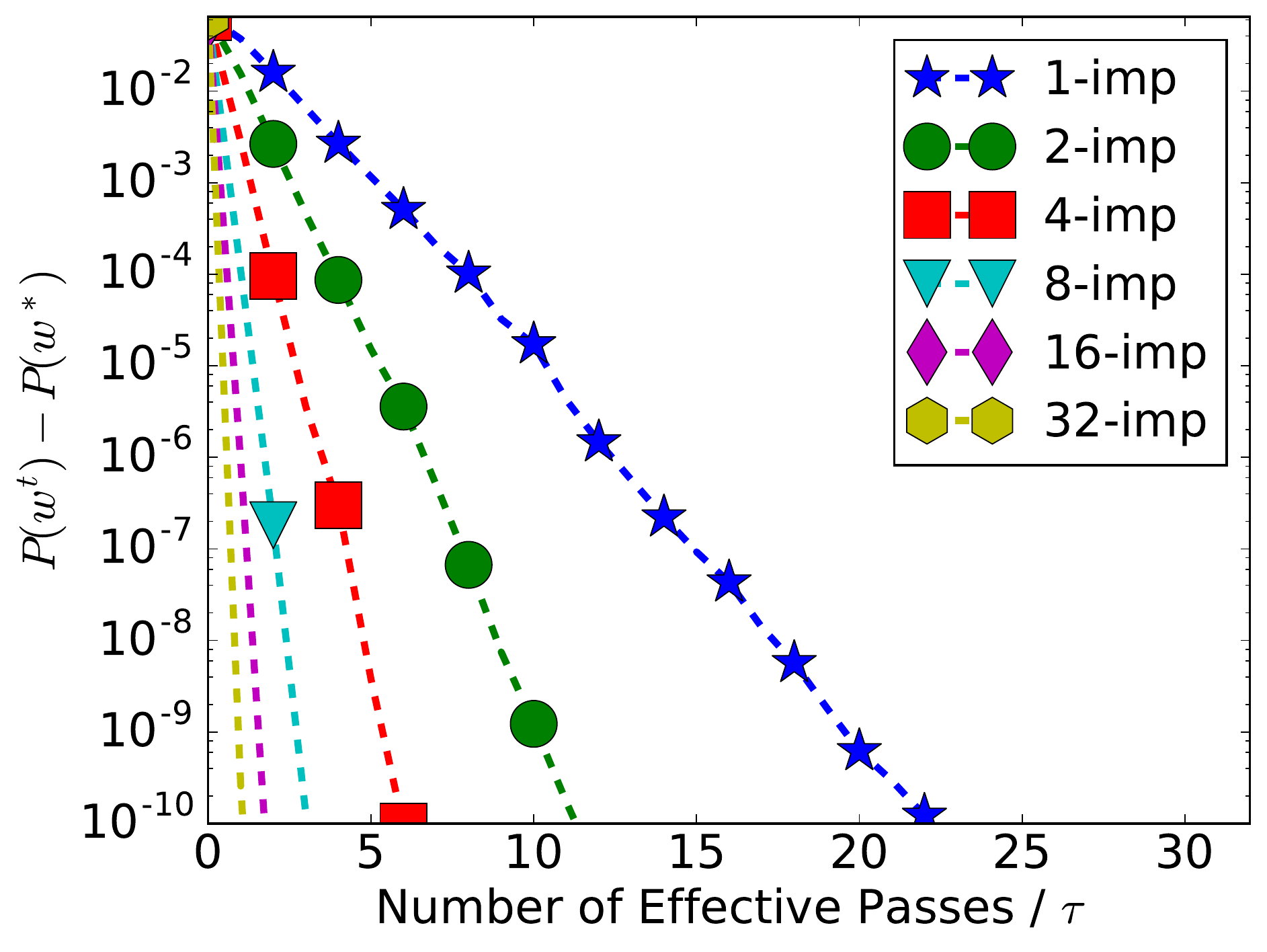}}
\hfill
\subfigure[chisq1, $\tau$-nice (left), $\tau$-importance (right)]{\includegraphics[width=0.5\columnwidth]{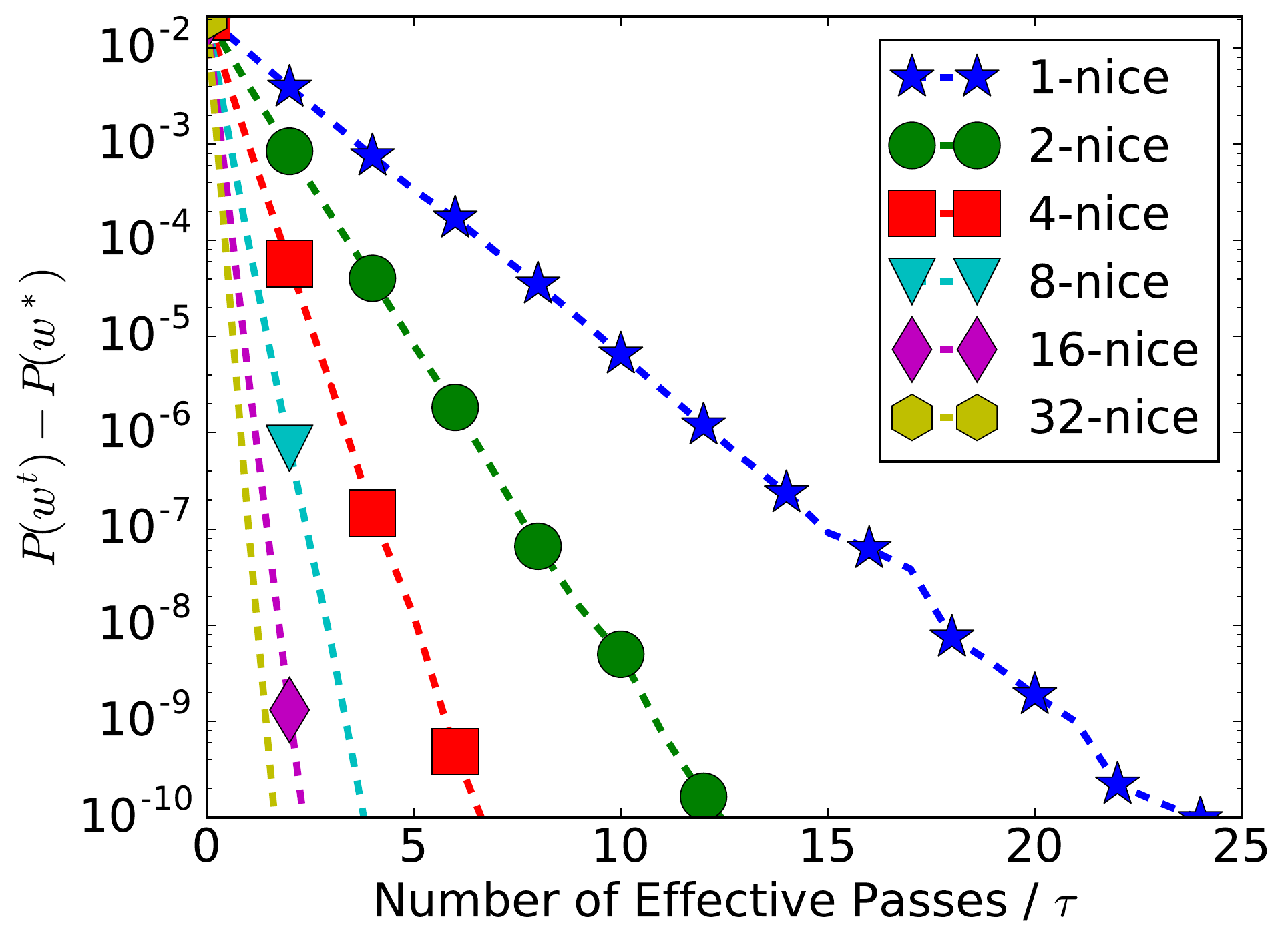}\includegraphics[width=0.5\columnwidth]{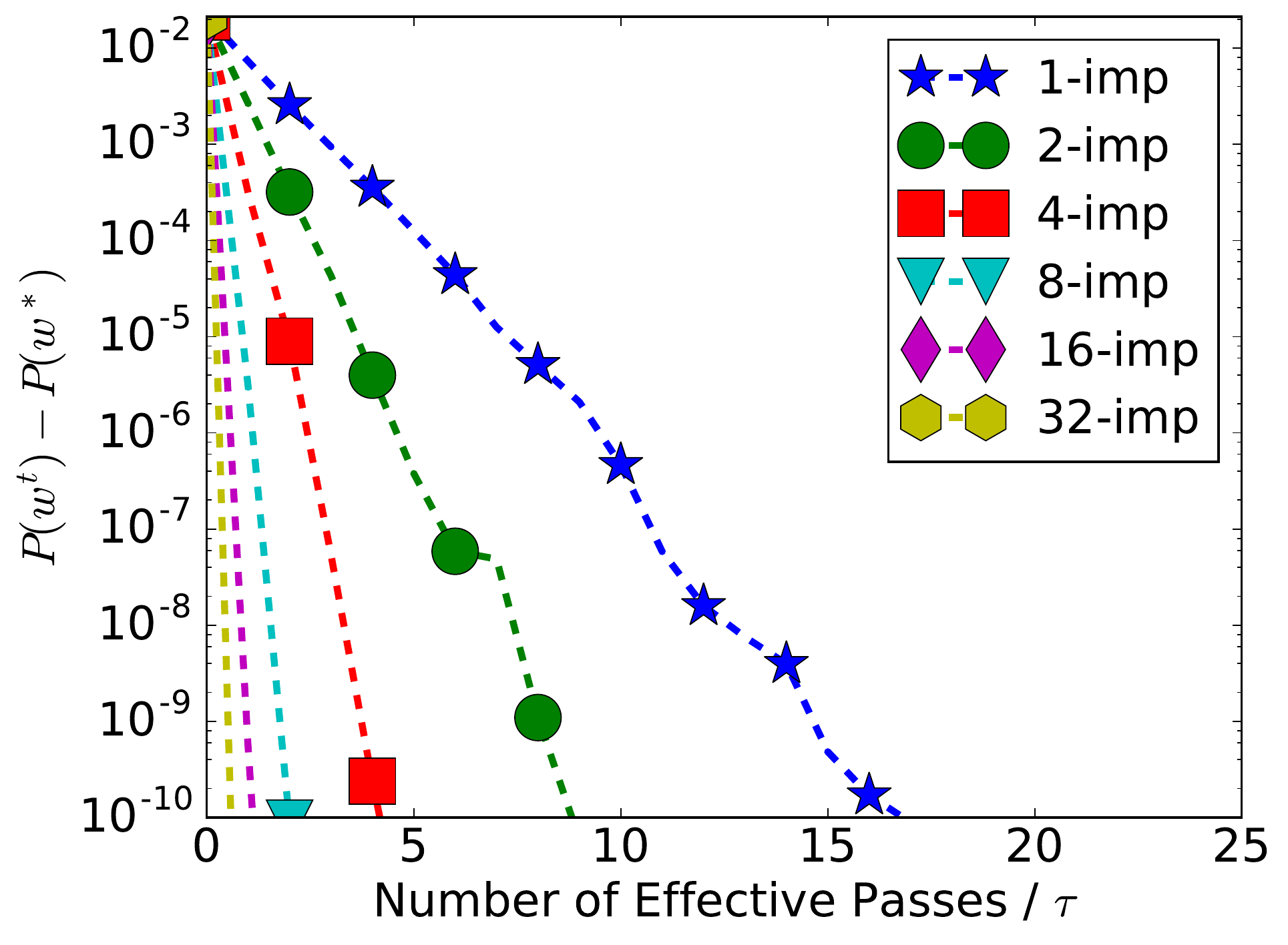}}
\hfill
\subfigure[extreme, $\tau$-nice (left), $\tau$-importance (right)]{\includegraphics[width=0.5\columnwidth]{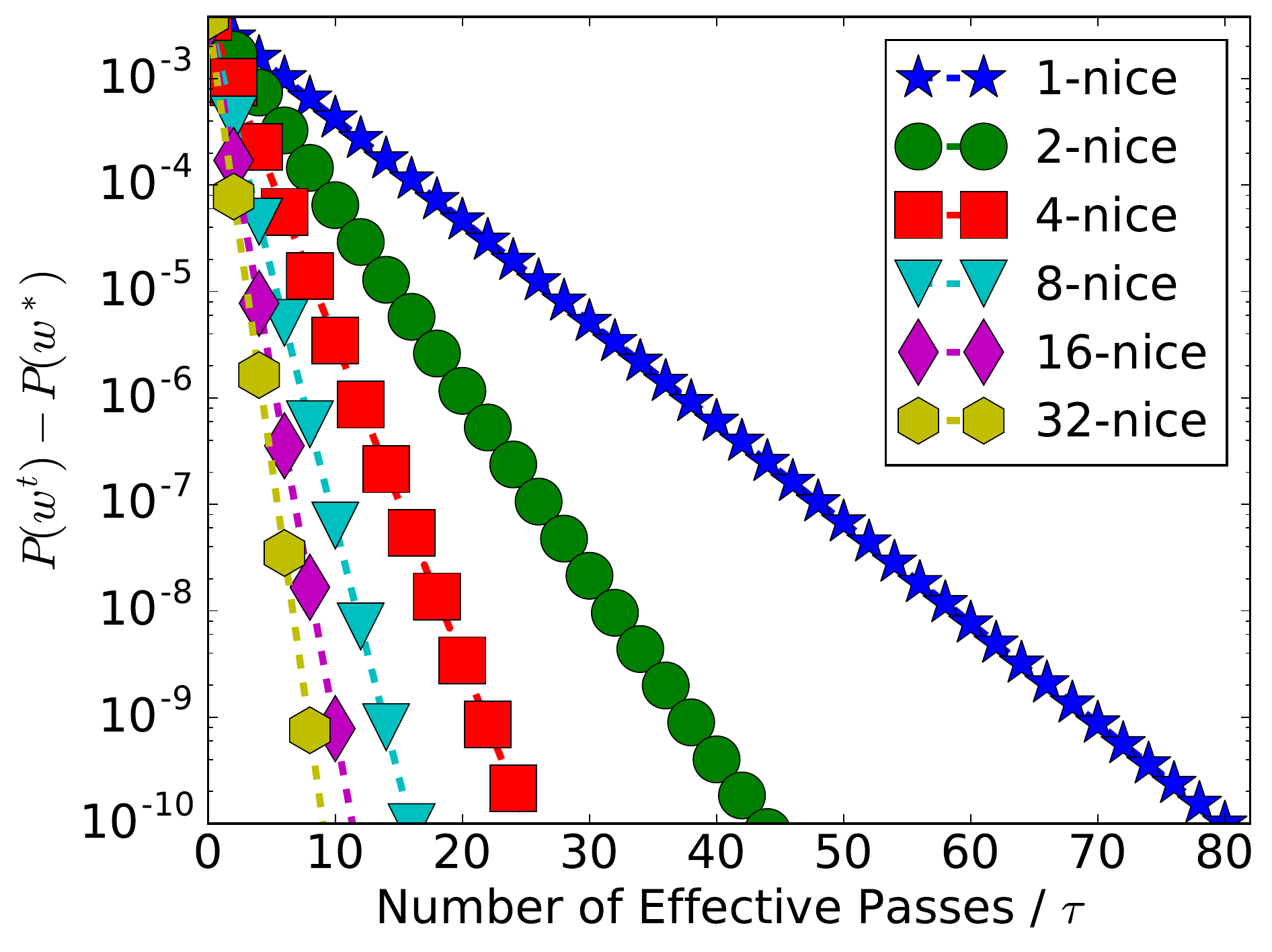}\includegraphics[width=0.5\columnwidth]{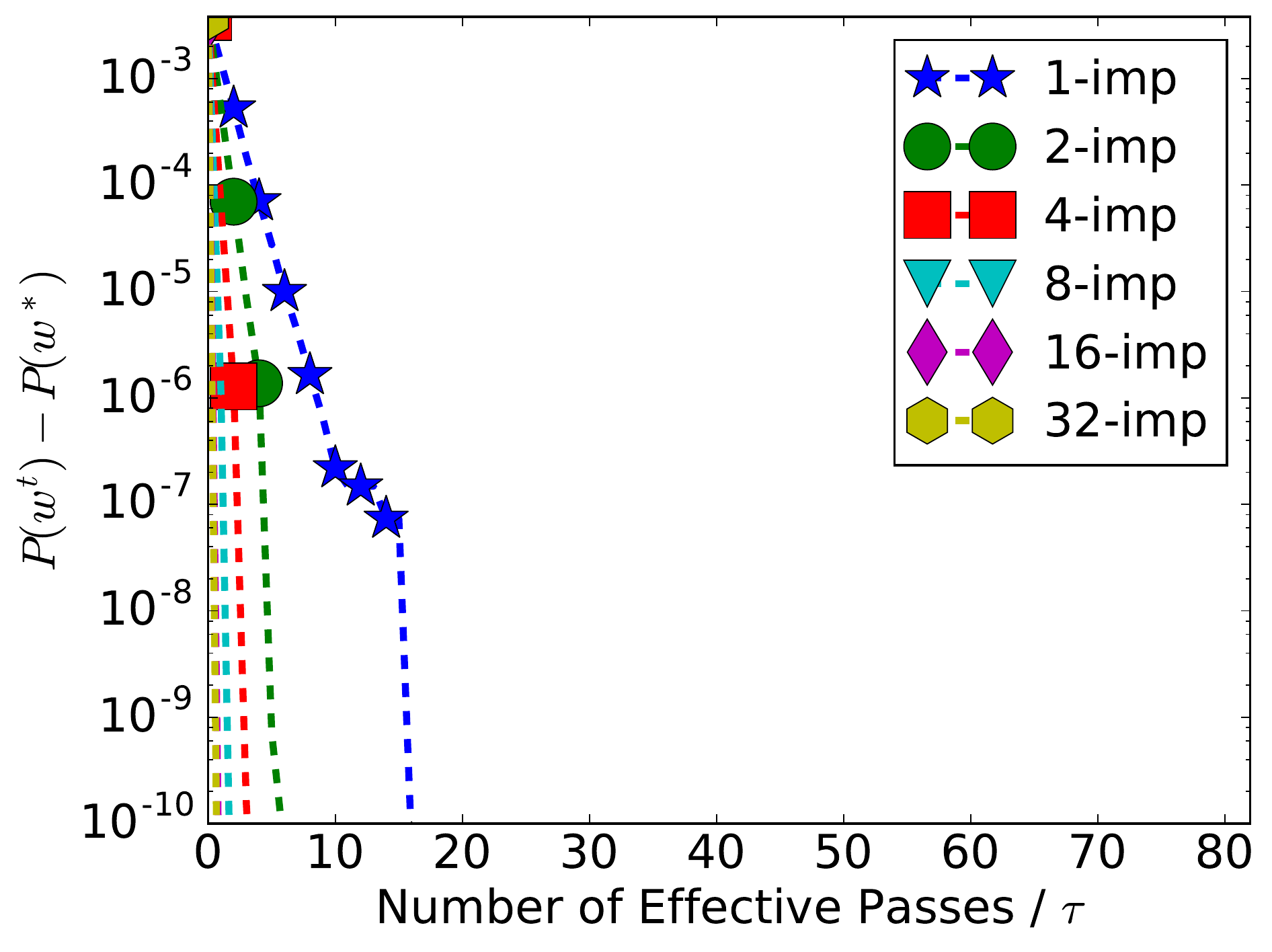}}
\caption{Artificial datasets from Table \ref{tab:dists} with $\omega = 0.1$}
\label{fig:artificial_0.1}
\end{minipage}
\end{figure}

\clearpage
\section{Proof of Theorem~\ref{thm:ESO}}

\subsection{Three lemmas}

We first establish three lemmas, and then proceed with the proof of the main theorem. With each sampling $\hat{S}$ we associate an $n\times n$ ``probability matrix''  defined as follows: $\bP_{ij}(\hat{S}) = \Prob(i\in \hat{S}, j\in \hat{S})$. Our first lemma  characterizes  the probability matrix of the bucket sampling.

\begin{lemma} \label{lem:09hssss} If $\hat{S}$ is a bucket sampling, then
\begin{equation}\label{eq:Px}
\mathbf{P}(\hat{S}) = p p^\top \circ (\mathbf{E} - \mathbf{B}) + \Diag(p),
\end{equation}
where  $\mathbf{E}\in \R^{n\times n}$ is the matrix  of all ones,  
\begin{equation}\label{eq:B} \mathbf{B} \eqdef \sum_{l=1}^\tau \bP(B_l),\end{equation}
and $\circ$ denotes the Hadamard (elementwise) product of matrices. Note that $\bB$ is the 0-1 matrix given by $\mathbf{B}_{ij} = 1$ if and only if $i,j$ belong to the same bucket $B_l$ for some $l$.
\end{lemma}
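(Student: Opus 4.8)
The plan is to verify the matrix identity \eqref{eq:Px} entrywise, splitting into three cases according to the relative position of the indices $i,j$ with respect to the partition $B$. Throughout I use that a bucket sampling selects, \emph{independently across buckets}, exactly one example from each $B_l$; this independence is precisely what makes the vector $p$ (subject to $\sum_{i\in B_l}p_i=1$) determine the sampling uniquely, and it is the only structural fact beyond the definition that the argument needs. I also use the characterization of $\mathbf B$ stated in the lemma, namely that $\mathbf B_{ij}=1$ if and only if $i,j$ lie in a common bucket (equivalently $\mathbf B=\sum_l\bP(B_l)$ as in \eqref{eq:B}), and that $\mathbf E_{ij}=1$ for all $i,j$.

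For the diagonal, the event $\{i\in\hat S,\,i\in\hat S\}$ equals $\{i\in\hat S\}$, so $\mathbf P_{ii}(\hat S)=p_i$; on the right-hand side $\mathbf B_{ii}=1=\mathbf E_{ii}$ forces $(pp^\top\circ(\mathbf E-\mathbf B))_{ii}=p_i^2(\mathbf E_{ii}-\mathbf B_{ii})=0$, while $\Diag(p)_{ii}=p_i$, so the two sides agree. For an off-diagonal pair $i\neq j$ in a common bucket $B_l$, the constraint $|\hat S\cap B_l|=1$ makes $\{i\in\hat S\}$ and $\{j\in\hat S\}$ mutually exclusive, whence $\mathbf P_{ij}(\hat S)=0$; on the right $\mathbf B_{ij}=1=\mathbf E_{ij}$ and $i\neq j$ give $(pp^\top\circ(\mathbf E-\mathbf B))_{ij}=p_ip_j(\mathbf E_{ij}-\mathbf B_{ij})=0$ and $\Diag(p)_{ij}=0$, so again they match. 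Finally, for $i\in B_l$, $j\in B_m$ with $l\neq m$, independence of the selections in the two distinct buckets yields $\mathbf P_{ij}(\hat S)=\Prob(i\in\hat S)\,\Prob(j\in\hat S)=p_ip_j$, while on the right $\mathbf B_{ij}=0$ and $\mathbf E_{ij}=1$ give $(pp^\top\circ(\mathbf E-\mathbf B))_{ij}=p_ip_j$ with $\Diag(p)_{ij}=0$. Assembling the three cases proves \eqref{eq:Px}.

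The only genuine subtlety, and the step I would flag, is the cross-bucket factorization $\mathbf P_{ij}=p_ip_j$: this is \emph{not} implied by the marginals $p$ alone, but rather by the (implicit) independence of the per-bucket choices, which is exactly the modelling convention that makes a bucket sampling well defined from $p$. The remaining two cases and the identification of $\mathbf B$ as the block-diagonal $0$–$1$ matrix are routine bookkeeping that I would not belabor.
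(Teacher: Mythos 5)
Your proof is correct and follows essentially the same route as the paper's: the paper simply writes down $\mathbf{P}_{ij}(\hat S)$ entrywise in the same three cases ($i=j$; $i\neq j$ in the same bucket; $i,j$ in different buckets) and compares with \eqref{eq:Px}. Your added remark that the cross-bucket factorization $\mathbf{P}_{ij}=p_ip_j$ rests on independence of the per-bucket choices is a fair and worthwhile observation --- the paper treats this as part of the definition (it is what makes $p$ uniquely determine the sampling) and asserts the case split without comment.
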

\begin{proof}
Let $\bP = \bP(\hat{S})$. By definition  \[\bP_{ij} =  \begin{cases} ~p_i \quad& i=j \\  ~p_i p_j \quad& i\in B_l,~j\in B_k,~l\neq k \\  ~0 \quad&\text{otherwise}. \end{cases}\]
It only remains to compare this to \eqref{eq:Px}.
\end{proof}

\begin{lemma} Let $J$ be a nonempty subset of $[n]$, let $\bB$ be as in Lemma~\ref{lem:09hssss} and put $\omega_J' \eqdef |\{l : J\cap B_l \neq \emptyset\}|$. Then
\begin{equation}\label{eq:suhisihis8s}  \bP(J) \circ \bB \succeq \frac{1}{\omega'_J} \bP(J).\end{equation}
\end{lemma}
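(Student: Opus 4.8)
The plan is to identify the three matrices explicitly as outer products of indicator vectors, reduce the Hadamard product $\bP(J)\circ\bB$ to a simple block structure, and then close with a single application of the Cauchy--Schwarz inequality lifted to the Loewner order.

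First I would pin down the notation: for a \emph{deterministic} set $J$, the matrix $\bP(J)$ is to be read as the probability matrix of the trivial sampling that equals $J$ with probability one, so that $\bP(J)_{ij} = \mathbf{1}[i\in J]\,\mathbf{1}[j\in J]$, i.e. $\bP(J) = \mathbf{1}_J \mathbf{1}_J^\top$, where $\mathbf{1}_J \in \R^n$ is the indicator vector of $J$. (This is consistent with \eqref{eq:B}, since summing $\mathbf{1}_{B_l}\mathbf{1}_{B_l}^\top$ over the partition gives exactly the bucket-incidence matrix $\bB$.) With this convention and the characterization of $\bB$ from Lemma~\ref{lem:09hssss}, the entry $(\bP(J)\circ\bB)_{ij}$ equals $1$ precisely when $i,j\in J$ \emph{and} $i,j$ lie in a common bucket.

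Next I would set $J_l := J\cap B_l$ and use that $J=\bigsqcup_{l} J_l$ is a disjoint union, whence $\mathbf{1}_J = \sum_l \mathbf{1}_{J_l}$. The block description above then translates into the two identities
\[ \bP(J)\circ\bB = \sum_{l:\,J_l\neq\emptyset} \mathbf{1}_{J_l}\mathbf{1}_{J_l}^\top, \qquad \bP(J) = \Big(\sum_{l:\,J_l\neq\emptyset}\mathbf{1}_{J_l}\Big)\Big(\sum_{l:\,J_l\neq\emptyset}\mathbf{1}_{J_l}\Big)^\top, \]
where in both cases exactly the $\omega'_J$ indices $l$ with $J_l\neq\emptyset$ contribute. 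Writing $u_1,\dots,u_{\omega'_J}$ for the vectors $\mathbf{1}_{J_l}$ over these indices, the target inequality \eqref{eq:suhisihis8s} becomes the clean statement $\omega'_J\sum_k u_k u_k^\top \succeq \big(\sum_k u_k\big)\big(\sum_k u_k\big)^\top$.

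Finally I would verify this last relation directly by testing against an arbitrary $x\in\R^n$: one has $x^\top\big(\sum_k u_k\big)\big(\sum_k u_k\big)^\top x = \big(\sum_k \langle x,u_k\rangle\big)^2 \le \omega'_J \sum_k \langle x,u_k\rangle^2 = \omega'_J\, x^\top\big(\sum_k u_k u_k^\top\big)x$, where the middle inequality is Cauchy--Schwarz applied to the $\omega'_J$-dimensional vector of inner products $(\langle x,u_k\rangle)_k$ against the all-ones vector. Since $x$ was arbitrary, the Loewner inequality follows. I do not anticipate a real obstacle: the only genuine subtlety is the correct reading of $\bP(J)$ for a deterministic $J$ together with the bookkeeping of which buckets are hit by $J$; once the matrices are written as sums of rank-one terms, the estimate reduces to the familiar scalar inequality $(\sum_k a_k)^2 \le m\sum_k a_k^2$ promoted to matrices.
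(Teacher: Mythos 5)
Your proposal is correct and is essentially the paper's own argument: the paper likewise identifies $\bP(J)\circ\bB=\sum_l \bP(J\cap B_l)$ and bounds $h^\top\bP(J)h=\bigl(\sum_l\sum_{i\in J\cap B_l}h_i\bigr)^2\le\omega_J'\sum_l\bigl(\sum_{i\in J\cap B_l}h_i\bigr)^2$ by Cauchy--Schwarz, which is exactly your $\bigl(\sum_k a_k\bigr)^2\le m\sum_k a_k^2$ step phrased via quadratic forms rather than rank-one outer products. Your explicit reading of $\bP(J)$ as $\mathbf{1}_J\mathbf{1}_J^\top$ and the restriction of the sum to the $\omega_J'$ nonempty intersections are just a cleaner bookkeeping of the same proof.
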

\begin{proof}
For any $h\in \R^n$, we have
\[ h^\top \bP(J) h =  \left(\sum_{i\in J} h_i\right)^2 = \left(\sum_{l=1}^\tau \sum_{i\in J\cap B_l}  h_i\right)^2 
 \leq \omega_J' \sum_{l=1}^\tau \left( \sum_{i\in J\cap B_l }  h_i\right)^2   =\omega_J' \sum_{l=1}^\tau h^ \top \bP(J\cap B_l)h, \]
\label{eq:iuhdgd899}
where we used the Cauchy-Schwarz inequality. Using this, we obtain
\[ \bP(J) \circ \bB  \overset{\eqref{eq:B}}{=} \bP(J)\circ  \sum_{l=1}^ \tau \bP(B_l)=  \sum_{l=1}^ \tau \bP(J)\circ\bP(B_l)  =\sum_{l=1}^\tau \bP(J\cap B_l) 
\overset{\eqref{eq:iuhdgd899}}{\succeq} \frac{1}{\omega'}\bP(J). \]
\end{proof}

\begin{lemma} Let $J$ be any nonempty subset of $[n]$ and $\hat{S}$ be a bucket sampling. Then
\begin{equation}\label{eq:9s989s8sss} \bP(J ) \circ pp^\top\preceq \left(\sum_{i\in J} p_i \right) \Diag(\bP(J\cap \hat{S})) . \end{equation}
\end{lemma}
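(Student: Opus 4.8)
The plan is to reduce the claimed matrix inequality to a single weighted Cauchy--Schwarz estimate by testing both sides against an arbitrary vector $h\in\R^n$, in the same spirit as the proof of the previous lemma. First I would evaluate the quadratic form of the left-hand side. Since $(\bP(J))_{ij}=1$ exactly when $i,j\in J$ and $(pp^\top)_{ij}=p_ip_j$, the Hadamard product gives, for any $h\in\R^n$,
\[ h^\top\!\left(\bP(J)\circ pp^\top\right)\!h = \sum_{i,j\in J} p_i p_j h_i h_j = \left(\sum_{i\in J}p_i h_i\right)^2.\]

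Next I would evaluate the right-hand side. The diagonal entries of $\bP(J\cap\hat S)$ are $\Prob(i\in J\cap \hat S)$, which equal $p_i$ for $i\in J$ and $0$ otherwise; hence $\Diag(\bP(J\cap\hat S))$ is the diagonal matrix carrying exactly these entries, and
\[ \left(\sum_{i\in J}p_i\right) h^\top \Diag(\bP(J\cap\hat S))\, h = \left(\sum_{i\in J}p_i\right)\left(\sum_{i\in J}p_i h_i^2\right).\]

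Finally, applying the Cauchy--Schwarz inequality to the vectors $(\sqrt{p_i})_{i\in J}$ and $(\sqrt{p_i}\,h_i)_{i\in J}$ yields $\left(\sum_{i\in J}p_i h_i\right)^2\le \left(\sum_{i\in J}p_i\right)\left(\sum_{i\in J}p_i h_i^2\right)$, which is precisely the inequality between the two quadratic forms above. Since $h$ was arbitrary, this proves \eqref{eq:9s989s8sss}. I expect no real obstacle in this argument: the only step requiring care is the bookkeeping that turns the Hadamard product into the squared weighted sum and correctly reads off the diagonal of $\bP(J\cap\hat S)$; once those two quadratic forms are in hand, the result is an immediate consequence of Cauchy--Schwarz (equivalently, of the nonnegativity of the weighted variance $\sum_{i\in J}p_i(h_i-\bar h)^2$ with weights summing to $\sum_{i\in J}p_i$).
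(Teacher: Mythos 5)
Your proof is correct and follows essentially the same route as the paper: test both sides against an arbitrary $h$, identify the quadratic forms $\left(\sum_{i\in J}p_ih_i\right)^2$ and $\left(\sum_{i\in J}p_i\right)\left(\sum_{i\in J}p_ih_i^2\right)$, and apply Cauchy--Schwarz to $(\sqrt{p_i}h_i)_{i\in J}$ and $(\sqrt{p_i})_{i\in J}$. If anything, your write-up is slightly more careful than the paper's, whose displayed Cauchy--Schwarz inequality omits the square and parentheses that your version states correctly.
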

\begin{proof}
Choose any $h\in \R^n$ and note that \[ h^\top (\bP(J ) \circ pp^\top) h  = \left(\sum_{i\in J} p_i h_i\right)^2 =  \left(\sum_{i\in J} x_i y_i\right)^2, \] where $x_i  = \sqrt{p_i} h_i$ and $y_i = \sqrt{p_i}$. It remains to apply the Cauchy-Schwarz inequality: \[\sum_{i\in J} x_i y_i \leq \sum_{i\in J} x_i^2 \sum_{i\in J} y_i^2\] and notice that the $i$-th element on the diagonal of $\bP(J\cap \hat{S})$ is $p_i$ for $i\in J$ and 0 for $i\notin J$
\end{proof}

\subsection{Proof of Theorem~\ref{thm:ESO}}

By Theorem 5.2 in \cite{ESO}, we know that inequality \eqref{eq:ESO} holds for parameters $\{v_i\}$ set to \[ v_i = \sum_{j=1}^d \lambda'(\bP(J_j \cap \hat{S}))\mathbf{X}_{ji}^2 ,\] where $\lambda'(\mathbf{M})$ is the largest normalized eigenvalue of symmetric matrix $\mathbf{M}$ defined as
\[\lambda'(\mathbf{M}) \eqdef \max_{h} \left\{h^\top \bM h \;:\; h^\top \Diag(\bM) h \leq 1\right\}.\]
Furthermore, 
\begin{align*}
\bP(J_j\cap \hat{S}) &= \bP(J_j) \circ \bP(\hat{S}) \\ 
&\overset{\eqref{eq:Px}}{=}  \bP(J_j)  \circ p p^\top - \bP(J_j)   \circ p p^\top  \circ \bB + \bP(J_j)   \circ \Diag(p) \\
&\overset{\eqref{eq:suhisihis8s}}{\preceq}   \left(1 -  \frac{1}{\omega'_J} \right) \bP(J_j)  \circ p p^\top + \bP(J_j)   \circ \Diag(p) \\ &\overset{\eqref{eq:9s989s8sss}}{\preceq} \left(1 - \frac{1}{\omega'_J} \right) \delta_{j} \Diag(\bP(J_j \cap \hat{S}))   + \Diag(\bP(J_j \cap \hat{S})), 
\end{align*}
whence $\lambda'(\bP(J_j\cap \hat{S}) ) \leq 1 + \left(1 - 1/\omega'_J \right) \delta_{j}$, which concludes the proof.

\clearpage

\bibliography{minibatching}
\bibliographystyle{plainnat}

\end{document}